\documentclass{article} 
\usepackage{fullpage}
\usepackage{times}

\usepackage[utf8]{inputenc} 
\usepackage[T1]{fontenc}    
\usepackage{lmodern}
\usepackage{hyperref}       
\usepackage{url}            
\usepackage{booktabs}       
\usepackage{amsfonts}       
\usepackage{nicefrac}       
\usepackage{microtype}      
\usepackage{amsmath}
\usepackage{amsthm}
\usepackage{color}
\usepackage{enumitem}

\usepackage{epsfig}
\usepackage{graphicx}
\usepackage{booktabs}
\usepackage{lipsum}
\usepackage{multirow}
\usepackage{footnote}
\usepackage{mathtools}
\usepackage{algorithm}
\usepackage{algpseudocode}
\usepackage{subfig}
\usepackage{float}
\usepackage{adjustbox}
\usepackage{import}
\usepackage{caption}

\usepackage{natbib}

\usepackage{amsmath,amsfonts,bm}

 \newcommand{\IKL}{\mathrm{IKL}}
\newcommand{\dd}{\mathrm{d}}

\def\eqref#1{equation~\ref{#1}}

\def\1{\bm{1}}

\DeclareMathAlphabet{\mathsfit}{\encodingdefault}{\sfdefault}{m}{sl}
\SetMathAlphabet{\mathsfit}{bold}{\encodingdefault}{\sfdefault}{bx}{n}

\newcommand{\E}{\mathbb{E}}

\usepackage{amsmath,amssymb,amsthm}
\newtheorem{proposition}{Proposition}

\newtheorem{lemma}{Lemma} 
\newtheorem{theorem}{Theorem} 
\theoremstyle{remark} 

\usepackage{tikz}
\usetikzlibrary{arrows.meta, positioning, calc, fit, backgrounds}
\usepackage{pgfplots}
\pgfplotsset{compat=1.18}
\usepackage{enumitem}
\usepackage{algorithm}
\usepackage{algpseudocode}
\usepackage{booktabs}

\usepackage{hyperref}
\usepackage{url}
\usepgfplotslibrary{groupplots}

\title{Reconstruction-Aware Distribution Matching \\ for Inverse Problems with Unpaired Data}

\date{~}

\author{
  Thomas Ryckeboer$^\ast$, Giacomo Meanti$^\ast$, Julien Mairal$^\ast$\\
  Univ. Grenoble Alpes, Inria, CNRS, Grenoble INP, LJK \\
  \texttt{firstname.lastname@inria.fr} \\
$^\ast$Equal contribution
}

\newcommand\vsp{{\vspace*{-0.1cm}}}

\begin{document}

\maketitle

   \vspace*{-0.6cm}
\begin{abstract}
Recovering the forward operator of an imaging system from unpaired data
avoids both calibration hardware and the clean/degraded pairs that
supervision requires---pairs that, for a real lens, are often impossible to
acquire. Existing unpaired approaches that explicitly estimate the forward operator by distribution matching  
evaluate a candidate only through the measurements it generates, whose distribution should match that of the real ones.
Components suppressed by the operator are barely present in the degraded
images, and therefore barely constrain the operator under such a criterion. Yet
inversion precisely tries to recover them. As a result, two operators that are
nearly indistinguishable as forward models can invert very differently. We
address this by adding a comparison on the clean side: real degraded
images, once restored, should be distributed like clean ones. Restorations are computed by
a differentiable plug-and-play algorithm parameterized by the operator
being learned. On spatially varying PSF
calibration and blind super-resolution, the method recovers more
accurate operators than degraded-side matching alone, and closes much of the
gap to restoration with the true operator.

   \vspace*{0.1cm}
   {\noindent Code available at \url{http://github.com/inria-thoth/reconstruction-aware-distribution-matching}}
\end{abstract}

\section{Introduction}\label{sec:intro}
Typical image inverse problems consist of recovering a clean image~$x$ from a measurement $y = Ax + \varepsilon$, where~$A$ describes the forward operator and~$\varepsilon$ is an additive noise term. 
Many methods assume knowledge of~$A$ or that paired examples $(x,y)$ are available for supervised training. 
In practice, neither assumption is easy to satisfy, even in apparently simple scenarios. For instance, when $A$ is a blur due to optics,
accurate estimation of $A$ typically requires a controlled acquisition setup often combined with specialized displays~\citep{kee2011modeling,bauer2018automatic} for photography or beads for microscopy~\citep{speiser2021deep}. Moreover, the procedure must be repeated for every optical system.  
Paired real data is also difficult to obtain and remains a bottleneck for real-world restoration: 
even for a simple denoising task, reference images are typically constructed from multiple exposures of the same scene, which can introduce brightness changes and subpixel misalignment~\citep{zamir2020cycleisp}. 
Many methods consequently train on synthetic degradation pipelines containing a broad range of corruptions \citep{wang2021realesrgan}. 
These pipelines are, however, hand-designed: there is no guarantee that they cover the acquisition system at hand, and performance drops when they do not.

When $A$ is unknown and paired data is unavailable, two common approaches are blind and unpaired methods. 
\emph{Blind} methods estimate $A$ and $x$ jointly from a single degraded image, using priors on both the clean image and the degradation operator
\citep{schuler2012blind,chung2023blinddps,laroche2024fast}. 
They require only the degraded image~$y$ as input, but the operator prior is fixed before observing the acquisition system and may be a poor match. 
\emph{Unpaired} methods~\citep{yuan2018unsupervised,sim2020optimal,wolf2021deflow,tang2024residual} make a different trade-off: they require a collection $\mathcal{Y}$ of degraded images from the same system, together with a separate collection $\mathcal{X}$ of clean images, and learn a mapping between the two distributions without requiring per-image correspondence. 
In most of these approaches, the degradation is represented implicitly by a network and cannot be plugged into another non-blind solver. 
\citet{meanti2025unsupervised} instead recover an \emph{explicit} forward operator through distribution matching, using conditional flow matching~\citep{lipman2023flow,liu2023flow}. 
The recovered operator can then be combined with any non-blind solver \citep{venkatakrishnan2013plug,zhang2021plug,zhu2023diffpir}, and is itself the quantity of interest in calibration problems. This is the setting we adopt.

Yet, \citet{meanti2025unsupervised} only evaluate an operator through the measurements it produces, matching the distribution of images from ${\mathcal X}$ degraded by a candidate operator $A$ to the real degraded images~${\mathcal Y}$. This can be problematic when the forward operator removes information. Components that are strongly suppressed by $A$ are barely present in the degraded images and therefore provide little information for estimating $A$. 
Yet reconstruction attempts to recover these same components, so an operator error that is difficult to detect on the degraded side can still have a substantial effect after inversion.
\citet{efrat2013accurate} study this sensitivity for super-resolution and show that an accurate blur model can be more important than a sophisticated image prior. 
Thus, two operators that produce similar degraded-image distributions can behave differently when used for reconstruction.
In this paper,  we make operator estimation reconstruction-aware, using only unpaired data.

\vspace*{-0.2cm}
\paragraph{Our approach}
In this work, we improve on the criterion of \citet{meanti2025unsupervised},
which operates only on the degraded-image space, with a second comparison after reconstruction.
This additional term encourages
real measurements restored with the candidate operator to
produce images distributed like clean ones. The two comparisons therefore
evaluate the same operator from opposite sides of the inverse problem: the first
through its forward action, and the second through the procedure
which will be used for downstream reconstruction.

To allow end-to-end gradient-based learning, the restoration algorithm must depend explicitly on the candidate operator and 
be differentiable with respect to it. 
Most plug-and-play algorithms~\citep{venkatakrishnan2013plug} satisfy both requirements: the operator enters explicitly through the data-fidelity step, while a fixed denoiser provides the image prior. 
We use DPIR~\citep{zhang2021plug}, which provides high-quality restoration with a small number of iterations.

Differentiating the reconstruction objective is nevertheless more expensive than differentiating the measurement objective, as it requires backpropagating through an iterative restoration algorithm which involves calling a deep denoiser at each step. 
Backpropagation therefore requires storing the activations of all denoiser evaluations, which is memory intensive.
We use truncated backpropagation instead, retaining gradients only through the final iterations of the solver, following a strategy that has proved effective in other iterative optimization settings~\citep{shaban2019truncated}.

A second difficulty lies in balancing the two objectives because their gradients can have very different scales: In the measurement branch, the operator is applied once to generate a synthetic degradation. 
In the reconstruction branch, it affects every iteration of the restoration algorithm, so small operator changes can be amplified or attenuated as they propagate through the solver. 
Consequently, the relative magnitude of the two gradients can vary substantially during training, making a fixed weight difficult to tune. 
We therefore downweight the reconstruction gradient whenever its norm exceeds that of the measurement gradient, so that it cannot dominate because of its numerical scale.

\vspace*{-0.2cm}
\paragraph{Contributions}
\begin{itemize}[leftmargin=*,itemsep=1pt,topsep=2pt,parsep=0pt]
   \item We show theoretically that degraded-side matching barely constrains how
the operator acts on strongly attenuated image components, even though errors
on these components still affect the reconstruction. The clean-side comparison
remains sensitive to these components and acts as a prior on the operator,
favoring operators whose restorations are distributed most like clean images.
    \item We introduce reconstruction-aware distribution matching for unpaired operator learning. We complement measurement matching with a clean-side comparison obtained by restoring real measurements using a differentiable plug-and-play algorithm parameterized by the operator.
    \item Across blind deblurring, space-varying PSF calibration, and blind super-resolution tasks, reconstruction-aware matching produces more accurate estimates than degraded-side matching alone and substantially reduces the gap to restoration with the true operator.
\end{itemize}

\section{Related Work}\label{sec:related}
\vspace*{-0.1cm}
\paragraph{Learning degradations from unpaired data.}
Unpaired methods learn from a collection of degraded images and a separate
collection of clean ones, without per-image correspondence.
RCOT~\citep{tang2024residual} and 
UOTIP~\citep{lee2026uotip} learn a transport map from measurements to clean
signals, respectively in the blind and non-blind setting.
DeFlow~\citep{wolf2021deflow} uses a conditional normalizing flow, and
cycle-consistency approaches~\citep{zhu2017cyclegan,yuan2018unsupervised,sim2020optimal}
train forward and backward networks jointly; \citet{chen20unsup,maeda2020unpaired} further develop these approaches for blind super-resolution. 
Such implicit models can synthesize training pairs, but they do not
provide an explicit operator that can be plugged into a non-blind solver.
In contrast, our goal is to learn an \emph{explicit} forward operator. A few papers share the same goal:
\citet{lim2020cyclegan} replace one of the two cycleGAN generators by a
linear blur kernel for microscopy applications; \citet{luo2022learning} and \citet{levac2025double} learn
distributions over blur kernels (and noise, for the former) with adversarial
objectives, the latter building on AmbientGAN~\citep{bora2018ambientgan};
and \citet{meanti2025unsupervised} estimate a single
operator by diffusion distribution matching.
In all of them, the operator is evaluated only through the measurements it
generates, never through a reconstruction computed with it.

\vspace*{-0.1cm}
\paragraph{Blind restoration from a single image.}
Recent diffusion-based methods perform joint inference over the image and degradation \citep{chung2023blinddps,murata2023gibbsddrm,laroche2024fast,sanghvi2024kernel,li2025blindpnp}; blind super-resolution can similarly estimate the blur kernel from a single image~\citep{bell2019kernelgan,yang2024dynamic,luo2022deep}.
Estimating a high-dimensional  operator, such as a space-varying PSF, is challenging from a single observation with strong priors. 
We instead exploit a collection of images acquired with the same system, which carries more information about the degradation.

\vspace*{-0.1cm}
\paragraph{Reconstruction-aware operator estimation.}
Several works take reconstruction into account for kernel estimation.
\citet{chen2025physics} learn spatially-varying PSFs from calibration patterns, and assess the estimation quality through the quality of a deblurring step. \citet{lunz2021learned} correct an inaccurate operator using information from reconstruction, and \citet{guan2024modelmismatch} correct model mismatch jointly with reconstruction. 
Yet, these approaches do not address the unpaired learning problem, which is the focus of our work. 

\vspace*{-0.1cm}
\paragraph{Distribution matching and differentiable reconstruction.}
Score
distillation~\citep{poole2022dreamfusion} and its variational
extensions~\citep{wang2023prolificdreamer,luo2023diffinstruct} provide
gradients for matching a parameterized distribution to a pretrained generative
prior. \citet{meanti2025unsupervised} adapt these tools to explicit operator
learning with conditional flow
matching~\citep{lipman2023flow,liu2023flow}; we use the same
distribution-matching estimator for both our measurement- and
reconstruction-side losses. Plug-and-play
methods~\citep{venkatakrishnan2013plug,romano2017little} combine an explicit
data-fidelity term with a denoising prior. DPIR~\citep{zhang2021plug}
implements this principle with half-quadratic splitting.
We use DPIR because the forward operator remains explicit throughout
reconstruction, allowing the reconstruction loss to be differentiated with
respect to it.

\section{Method}\label{sec:method}
\definecolor{cleancol}{RGB}{38,84,124}
\definecolor{degcol}{RGB}{166,86,40}
\definecolor{opcol}{RGB}{20,20,20}

\begin{figure}[t]
\centering
\begin{tikzpicture}[
  font=\small,
  box/.style={
    draw, rounded corners=2.5pt, minimum width=26mm, minimum height=15mm,
    align=center, inner sep=3pt, line width=0.7pt},
  cleanbox/.style={box, draw=cleancol, fill=cleancol!7},
  degbox/.style={box, draw=degcol, fill=degcol!7},
  op/.style={-{Stealth[length=2.6mm,width=2mm]}, line width=1pt,
             draw=opcol},
  cmp/.style={{Stealth[length=2.2mm]}-{Stealth[length=2.2mm]}, dashed,
              dash pattern=on 2pt off 1.6pt, line width=0.7pt,
              draw=black!50},
  tie/.style={dotted, line width=0.7pt, draw=black!45},
  oplab/.style={fill=white, inner sep=2pt, font=\small},
  losslab/.style={fill=white, inner sep=2pt, font=\small, align=center},
]

\node[cleanbox] (X) at (0,2.35)
  {clean images\\[1pt] $X\sim P_X$};
\node[degbox] (YX) at (6.2,2.35)
  {synthetic measurements\\[1pt] $Y_\theta=A_\theta X+\varepsilon$};
\node[degbox] (Y) at (6.2,0)
  {real measurements\\[1pt] $Y\sim P_Y$};
\node[cleanbox] (RY) at (0,0)
  {restorations\\[1pt] $R_\theta(Y)$};

\draw[op] (X) -- node[oplab, above] {$A_\theta$}
                node[oplab, below, font=\scriptsize, text=black!60]
                     {forward branch} (YX);
\draw[op] (Y) -- node[oplab, below] {$R_\theta=\mathcal R(\cdot;A_\theta)$}
                node[oplab, above, font=\scriptsize, text=black!60]
                     {reconstruction branch} (RY);

\draw[cmp] (YX) -- node[losslab, right=1pt]
  {$F(\theta)$\\[-1pt]{\scriptsize measurement matching}} (Y);
\draw[cmp] (RY) -- node[losslab, left=1pt]
  {$G(\theta)$\\[-1pt]{\scriptsize reconstruction matching}} (X);

\node[draw, rounded corners=2pt, fill=black!4, draw=black!45,
      inner sep=4pt, align=center, font=\small] (share) at (3.1,1.175)
  {shared operator $\theta$\\[-1pt]
   {\scriptsize fixed reconstruction algorithm}};

\draw[tie] (share.north) -- (3.1,2.35);
\draw[tie] (share.south) -- (3.1,0);

\end{tikzpicture}
\caption{%
  \textbf{Reconstruction-aware operator learning.}
Measurement matching $F$ encourages clean images degraded with $A_\theta$ to reproduce the distribution
of real measurements. With the reconstruction-side objective $G$,
real measurements are restored using the same candidate operator, and the
resulting images are matched to the clean-image distribution. }
\label{fig:teaser}
\end{figure}

\vsp
\paragraph{Problem setting and formulation.}
We are given independent clean and degraded datasets
$\mathcal X$ and~$\mathcal Y$, with distributions $P_X$ and $P_Y$.
We assume that the clean set represents the latent image distribution underlying
the measurements,
\begin{equation}
    Y=A_{\theta_\star}X'+\varepsilon,
    \qquad
    X'\sim P_X,
    \qquad
    \varepsilon\sim\mathcal N(0,\sigma^2I),
    \label{eq:method:model}
\end{equation}
where $A_{\theta_\star}$ is the unknown forward operator parametrized by $\theta_\star$. The clean image $X'$
that generated a given measurement is never observed, so no pairing between
$\mathcal X$ and $\mathcal Y$ is available.

For a candidate operator $A_\theta$, we can nevertheless generate synthetic
measurements
\begin{equation}
    Y_\theta=A_\theta X+\varepsilon,
    \qquad X\sim P_X, \label{eq:method:synthetic}
\end{equation}
with distribution
$P_{Y,\theta}:=\operatorname{Law}(Y_\theta)$.
Given a distributional discrepancy $\mathcal D$, measurement-side matching
therefore minimizes
\begin{equation}
    F(\theta)
    :=
    \mathcal D\!\left(P_{Y,\theta}\Vert P_Y\right).
    \label{eq:method:F}
\end{equation}
Under the assumed model, $\theta_\star$ is a minimizer of $F$,
although it does not need to be the unique one.

Our main idea is to evaluate the same candidate operator through a 
reconstruction procedure.
Let
 $R_\theta(y):=\mathcal R(y;A_\theta)$
denote a non-blind reconstruction algorithm run with $A_\theta$,
differentiable with respect to $\theta$.
Applying it to real measurements induces
$    Q_\theta
    :=
    \operatorname{Law}\!\left(R_\theta(Y)\right),
    $ for $Y\sim P_Y.$
We then compare these reconstructions with the clean-image distribution,
and optimize
\begin{equation}
   L(\theta)=F(\theta)+\lambda G(\theta) ~~~~\text{with}~~~~G(\theta)
    :=
    \mathcal D\!\left(Q_\theta\Vert P_X\right).
    \label{eq:method:population-objective}
\end{equation}
Thus, $F$ asks whether $A_\theta$ reproduces the observed degradation
statistics, while $G$ asks how the same candidate behaves after inversion.
Figure~\ref{fig:teaser} illustrates the measurement and reconstruction branches.

\vsp
\paragraph{Complementary roles of the two objectives.}
Measurement matching is
tied to the forward model: with the correct operator, synthetic and
real measurements follow the same distribution. The reconstruction objective
plays a different role. It measures how the candidate operator affects the
distribution of the inverse solutions
by comparing restored measurements with clean
images. Even with the correct operator, these two distributions do not need to
coincide exactly, because reconstruction cannot fully recover information lost
   to degradation and noise. For most restorer choices, $G$ is even biased, but it nevertheless encourages a desirable property (the distribution of reconstructions should be similar to that of clean images) and remains sensitive to
operator errors that may be almost invisible in measurement space. We therefore
use $F$ and $G$ as complementary signals: $F$ constrains the forward
model, while $G$ acts as a prior on the components that $F$ barely constrains.

In Section~\ref{sec:theory}, we study the sensitivity of the two objectives to
an operator error on a single image component (say a frequency), with $\rho$ denoting the SNR of
that component after degradation. When $\rho\ll1$, the component is
strongly attenuated relative to the measurement noise and
measurement-side information decays as $\Theta(\rho^2)$, while the effect 
on reconstruction decays only as
$\Theta(\rho)$. 

\subsection{Practical distribution matching with IKL}
\label{sec:method:distribution-matching}

The formulation above is independent of the discrepancy $\mathcal D$.
Following \citet{meanti2025unsupervised}, we instantiate both branches with
   the integrated KL (IKL) loss function. 
   We present below its gradient, whereas the loss itself and further details are
   given in Appendix~\ref{sec:appendix:ikl}.
Specifically, consider matching a parameterized distribution $P_\theta$ to a fixed target
distribution $Q$. Let $Z_1(\theta)\sim P_\theta$ be a sample that is differentiable with respect to $\theta$,
$Z_0\sim\mathcal N(0,I)$, and $Z_t=(1-t)Z_0+tZ_1(\theta)$. Conditional flow
matching represents a distribution $P$ by a velocity field
\begin{equation}
    v_P(z,t)=\E[Z_1-Z_0\mid Z_t=z].
\end{equation}
Then, following~\citet{luo2023diffinstruct}, the gradient of the chosen IKL loss can be written
\begin{equation}
    \nabla_\theta\IKL(P_\theta\Vert Q)
    =
    \int_0^1
    \E\!\left[
        (J_\theta Z_1)^\top
        \bigl(v_{P_\theta}(Z_t,t)-v_Q(Z_t,t)\bigr)
    \right]\dd t ,
    \label{eq:method:ikl-gradient}
\end{equation}
where $J_\theta Z_1$ is the Jacobian of the generated sample with respect to
$\theta$. Following \citet{luo2023diffinstruct,meanti2025unsupervised}, we estimate $v_Q$ with a
\emph{teacher} flow trained once on the target data and then frozen, and
$v_{P_\theta}$ with an \emph{auxiliary} flow trained online on samples from the
current model distribution. The gradient~(\ref{eq:method:ikl-gradient}) is then estimated by sampling $t$ and $Z_0$, see Appendix~\ref{sec:appendix:ikl} for more details.

\vsp
\paragraph{Measurement and reconstruction branches.}
We apply the same estimator twice. For measurement matching,
$Z_1(\theta)=A_\theta X+\varepsilon$: the teacher is trained on real degraded
images and the auxiliary flow tracks the current synthetic measurements. For
reconstruction matching, $Z_1(\theta)=R_\theta(Y)$: the teacher is trained on
clean images and the auxiliary flow tracks the current restorations.

\subsection{Algorithmic Recipe}
\label{sec:method:restoration}
\label{sec:method:rec}

We next describe the main ingredients of our approach, presented in Algorithm~\ref{alg:method}. Appendix~\ref{sec:appendix:centering} details the operator constraints and parametrization, notably fixing the classical translation ambiguity in convolution operator estimation.

\vsp
\paragraph{Differentiable plug-and-play reconstruction}
We instantiate $R_\theta$ with DPIR~\citep{zhang2021plug}, a plug-and-play
half-quadratic splitting algorithm whose iterations alternate a data-fidelity
update involving $A_\theta$ with a fixed pretrained denoiser. Because the
restorer and its schedule are fixed, the reconstruction branch depends on
$\theta$ only through $A_\theta$. We run
$K$ DPIR iterations and differentiate only through the final
$K_{\rm bp}\leq K$, reducing activation memory from $O(K)$ to
$O(K_{\rm bp})$~\citep[see][]{monga2021algorithm,shaban2019truncated}. Additional
solver details are given in Appendix~\ref{sec:method:details}.

\vsp
\paragraph{Balancing the two stochastic gradients}
The theory motivates combining the two branches but does not determine their
relative scale. At operator-update
step $k$, let
\begin{equation}
    g_F \approx \nabla_\theta F(\theta_k),
    \qquad
    g_G \approx \nabla_\theta G(\theta_k),
\end{equation}
denote the stochastic IKL gradient estimates obtained from
Eq.~(\ref{eq:method:ikl-gradient}) for the measurement and reconstruction
branches, respectively. The former differentiates through one application of
$A_\theta$, whereas the latter differentiates through the iterative inverse
solver, so their norms can differ substantially.

Because we did not manage to obtain satisfactory results by using a fixed weighting strategy, we use a gradient-norm balancing heuristic and 
update the operator by using the gradient estimate
\begin{equation}
   g_k=g_F+\lambda\,c_k\,g_G, ~~~~\text{with}~~~~
    c_k
    =
    \min\!\left(
        1,
        \frac{\|g_F\|_2}{\|g_G\|_2+\epsilon}
    \right).
    \label{eq:method:balance}
\end{equation}
where $c_k$ normalizes the scale of the reconstruction gradient. The rule is
asymmetric: it downweights the reconstruction branch when its gradient is
larger than the measurement gradient, but never amplifies it above its raw
scale. Thus $G$ cannot dominate merely because differentiation through the
solver produces larger gradients, while $F$ retains its role as the
forward-model anchor. This is related in spirit to gradient normalization for
multi-loss optimization~\citep{chen2018gradnorm}. 

In practice, we use $\lambda=1$, which ensures
$c_k\|g_G\|\le\|g_F\|$, hence $\langle g_F+c_k g_G,\, g_F\rangle\ge 0$: the combined
update never opposes the gradient of $F$, and $\lambda=1$ is the largest weight
with this property. The clean-side term can thus steer the descent of $F$, but never
trade data fit for its own preference. The update therefore does not minimize
Eq.~(\ref{eq:method:population-objective}) with a fixed predefined weight: when $c_k<1$, it is the gradient of
$F+c_kG$, with a weight $c_k\le 1$ that adapts during training.
With exact gradients, its stationary points are the points where
$\nabla F+\mu\nabla G=0$ for some $\mu\in[0,1]$.

\begin{algorithm}[t]
\caption{Reconstruction-aware operator learning.}
\label{alg:method}
\begin{algorithmic}[1]
  \Require clean set $\mathcal X$, degraded set $\mathcal Y$, frozen teachers
    $v_X^{\rm tc},v_Y^{\rm tc}$, reconstruction steps $K$. 
  \State initialize $\theta$ and auxiliary fields
    $v_X^{\rm aux},v_Y^{\rm aux}$
  \For{$k=1,\ldots,N$}
      \State sample $X\sim\mathcal X$, $Y\sim\mathcal Y$ and $\varepsilon$
      \State update $v_Y^{\rm aux}$ on $A_\theta X+\varepsilon$ with $\theta$ fixed
      \State get new samples $X\sim\mathcal X$, $Y\sim\mathcal Y$ and $\varepsilon$
      \State estimate $g_F\approx\nabla_\theta F(\theta_k)$ using
      Eq.~(\ref{eq:method:ikl-gradient}) with
      $Z_1(\theta)=A_\theta X+\varepsilon$
      \State compute $R_{\theta_k}(Y)$ with $K$ DPIR iterations and estimate
      $g_G\approx\nabla_\theta G(\theta_k)$ using
      Eq.~(\ref{eq:method:ikl-gradient}). 
   \State update $v_X^{\rm aux}$ on $R_\theta(Y)$ detached with $\theta$ fixed
    \State $c_k\gets
      \min\!\left(1,\|g_F\|_2/(\|g_G\|_2+\epsilon)\right)$
    \State update $\theta$ with a stochastic gradient descent solver (\emph{e.g.}, Adam) using $g_F+\lambda\,c_k\,g_G$
    \State enforce operator constraints
      (Appendix~\ref{sec:appendix:centering})
  \EndFor
  \State \Return $A_\theta$
\end{algorithmic}
\end{algorithm}

\vsp
\paragraph{Practical heuristics.}
Algorithm~\ref{alg:method} presents the method in its idealized form. A few heuristics are used in all our experiments and are described in
Appendix~\ref{sec:method:details}. Notably, the weight $\lambda$ is ramped
linearly from $0$ to $1$ over an initial fraction of training: early on the operator
is close to its initialization, so its reconstructions are poor and the
clean-side gradient says little about $\theta$. Then, we report an exponential
moving average of the operator parameters rather than the last iterate, which
removes the step-to-step fluctuation of a low-dimensional parameter driven by
stochastic gradients.

\section{Theory: Why reconstruction-aware matching helps}
\label{sec:theory}
\vsp
\paragraph{Setup: Sensitivity analysis for one component.}
A forward operator can transmit different image components (\emph{e.g.}, a spatial frequency) with different
strengths. 
Here, we consider one scalar component and study the effects of the
loss functions~$F$ and~$G$ when estimating the operator's action on this
component, as well as the effect of an estimation error on the reconstruction. 

After rescaling the clean component and the measurement noise to unit variance,
we write its observation as
\begin{equation}
    Y_\rho=\sqrt{\rho}\,X+N,
    \qquad
    N\sim\mathcal N(0,1),
    \qquad
    \mathbb E[X]=0,\quad \mathbb E[X^2]=1,
    \label{eq:theory:channel}
\end{equation}
where $\rho > 0$ is the SNR of component $X$ after degradation.
When $\rho\ll1$, the operator has attenuated the component such
that its contribution to the measurement is small relative to noise.

\vsp
\paragraph{Sensitivity analysis for estimating $\rho$.}
We define a perturbation of the component gain $\rho$ by
\begin{equation}
    \rho_\delta=\rho(1+\delta),\qquad \delta>-1.
    \label{eq:theory:perturbation}
\end{equation}
This is a relative perturbation: $\delta$ represents the
fractional  error regardless of how visible the component is.
For the analysis below, we consider the KL loss function for the measurement side:
\begin{equation}
    F_\rho(\delta)
    =
    \operatorname{KL}
    \!\left(P_\rho\Vert P_{\rho(1+\delta)}\right)~~~~\text{with}~~~~P_\gamma
    :=
    \operatorname{Law}(\sqrt{\gamma}X+N).
    \label{eq:theory:F}
\end{equation}

To define a loss function on the reconstruction side, we must define a reconstruction method. We use here the posterior mean
which represents an ``ideal'' MMSE restorer:
\begin{equation}
    \eta_\gamma(y)
    =
    \mathbb E[X\mid \sqrt{\gamma}X+N=y].
\end{equation}
Its reconstruction risk is then
\begin{equation}
    \mathcal R_\rho(\delta)
    =
    \mathbb E\!\left[
      \bigl(X-\eta_{\rho(1+\delta)}(Y_\rho)\bigr)^2
    \right].
    \label{eq:theory:risk}
\end{equation}
Finally,
we consider the reconstruction-side objective using the Wasserstein-2 loss function:
\begin{equation}
    G_\rho(\delta)
    =
    W_2^2(Q_{\rho,\delta},P_X),
    \qquad
    P_X=\operatorname{Law}(X)~~~\text{and}~~~Q_{\rho,\delta}
    =
    \operatorname{Law}
    \!\left(\eta_{\rho(1+\delta)}(Y_\rho)\right).
    \label{eq:theory:G}
\end{equation}
Note that we use ordinary KL, $W_2$, and posterior-mean reconstruction because they make
the sensitivity analysis tractable, whereas the practical method uses IKL and
DPIR.
Under this choice of loss functions~$F_\rho$ and $G_\rho$, we can now state our main sensitivity results for low-SNR components.
\begin{theorem}[Sensitivity of poorly observed components]
\label{thm:theory:weak}
Assume that $X$ is bounded. As $\rho\to 0$,
\begin{equation}
    F_\rho''(0)
    =
    \frac{\rho^2}{2}+o(\rho^2),
    \label{eq:theory:F-curvature}~~~~\text{and}~~~~
    \left.
    \frac{d^2}{d\delta^2}
    \bigl[\mathcal R_\rho(\delta)-\mathcal R_\rho(0)\bigr]
    \right|_{\delta=0}
    =
    \frac{\rho}{2}+o(\rho).
\end{equation}
Moreover, there exists a constant $\kappa_X\in(0,1)$, depending only on
$P_X$, such that
\begin{equation}
    G_\rho'(0)
    =
    -\kappa_X\sqrt{\rho}+O(\rho),
    \qquad
    G_\rho''(0)
    =
    \frac{\kappa_X}{2}\sqrt{\rho}+O(\rho).
    \label{eq:theory:G-derivatives}
\end{equation}
\end{theorem}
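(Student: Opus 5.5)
The plan is to reduce everything to small-$\gamma$ expansions of the Gaussian-channel density $p_\gamma(y)=\mathbb E[\phi(y-\sqrt{\gamma}X)]$ and of the posterior mean $\eta_\gamma(y)$. Boundedness of $X$, say $|X|\le M$, makes these expansions legitimate: Taylor expanding $\phi(y-\sqrt\gamma x)$ in $\sqrt\gamma x$ gives remainders dominated by a polynomial in $|y|$ times $\phi(y)e^{M|y|\sqrt\gamma}$. Such terms are integrable uniformly for $\gamma$ in a neighbourhood of $\rho$, and their $\delta$-derivatives are controlled in the same way. Using $\mathbb E X=0$ and $\mathbb E X^2=1$, I expect
\begin{equation*}
p_\gamma(y)=\phi(y)\Bigl(1+\tfrac{\gamma}{2}(y^2-1)+\sqrt\gamma^{\,3}(\cdots)\Bigr),\qquad
\eta_\gamma(y)=\frac{\mathbb E[X\phi(y-\sqrt\gamma X)]}{p_\gamma(y)}=\sqrt\gamma\,y+\gamma\,r(y)+O(\gamma^{3/2}\,\mathrm{poly}(y)).
\end{equation*}
Here $r(y)$ is built from $\mathbb E X^3$ and $y$, and the expansions can be differentiated in $\gamma$ term by term.

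\textbf{Curvature of $F_\rho$.} Since $F_\rho(0)=0$ and $F_\rho'(0)=0$, the quantity $F_\rho''(0)$ is the Fisher information of the family $\delta\mapsto P_{\rho(1+\delta)}$ at $\delta=0$. By the chain rule this equals $\rho^2\,\mathcal I(\rho)$, where $\mathcal I(\gamma)=\mathbb E[(\partial_\gamma\log p_\gamma(Y))^2]$. From the density expansion, $\partial_\gamma\log p_\gamma(y)=\tfrac12(y^2-1)+O(\sqrt\gamma\,\mathrm{poly}(y))$. Hence $\mathcal I(\rho)\to\tfrac14\mathbb E[(N^2-1)^2]=\tfrac12$, which gives $F_\rho''(0)=\rho^2/2+o(\rho^2)$.

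\textbf{Curvature of the risk.} Because $\eta_\rho$ is the MMSE estimator for $Y_\rho$, orthogonality gives
\begin{equation*}
\mathcal R_\rho(\delta)-\mathcal R_\rho(0)=\mathbb E\bigl[(\eta_\rho(Y_\rho)-\eta_{\rho(1+\delta)}(Y_\rho))^2\bigr].
\end{equation*}
Its second derivative at $0$ is $2\,\mathbb E[(\partial_\delta\eta_{\rho(1+\delta)}(Y_\rho)|_{\delta=0})^2]$. From $\partial_\delta\eta=\rho\,\partial_\gamma\eta_\gamma|_{\gamma=\rho}=\tfrac{\sqrt\rho}{2}y+O(\rho\,\mathrm{poly}(y))$ and $\mathbb E Y_\rho^2=1+\rho$, we get $2\cdot\tfrac{\rho}{4}+o(\rho)=\rho/2+o(\rho)$.

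\textbf{Derivatives of $G_\rho$.} This is the main step. In one dimension $W_2^2$ is given by quantile coupling. The key observation is that $y\mapsto\eta_\gamma(y)$ is strictly increasing, since its derivative is $\sqrt\gamma\,\mathrm{Var}(X\mid Y=y)>0$. Hence the quantile function of $Q_{\rho,\delta}$ is $\eta_{\rho(1+\delta)}\circ F_{Y_\rho}^{-1}$, and the coupling puts all the $\delta$-dependence into $\eta$:
\begin{equation*}
G_\rho(\delta)=\mathbb E X^2+\mathbb E\bigl[\eta_{\rho(1+\delta)}(Y_\rho)^2\bigr]-2\int_0^1F_X^{-1}(u)\,\eta_{\rho(1+\delta)}\bigl(F_{Y_\rho}^{-1}(u)\bigr)\,\dd u .
\end{equation*}
I will insert $\eta_\gamma(y)=\sqrt\gamma\,y+O(\gamma)$ with $\gamma=\rho(1+\delta)$.
\begin{itemize}[leftmargin=*,itemsep=1pt,topsep=2pt,parsep=0pt]
\item The second term is $O(\rho)$ together with its first two $\delta$-derivatives.
\item In the cross term, $F_{Y_\rho}^{-1}\to\Phi^{-1}$ with error $O(\rho)$. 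The correction to $F_{Y_\rho}$ is $O(\rho)$ by the density expansion, because the $\sqrt\rho$ term vanishes since $\mathbb E X=0$.
\end{itemize}
The cross term is therefore $-2\sqrt{\rho}\sqrt{1+\delta}\,\kappa_X+O(\rho)$, where
\begin{equation*}
\kappa_X:=\int_0^1F_X^{-1}(u)\Phi^{-1}(u)\,\dd u .
\end{equation*}
Differentiating $-2\kappa_X\sqrt\rho\sqrt{1+\delta}$ once and twice at $\delta=0$ gives $-\kappa_X\sqrt\rho$ and $+\tfrac{\kappa_X}{2}\sqrt\rho$, as claimed.

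\textbf{Obstacles.} Two points need care. First, $\kappa_X\in(0,1)$: Cauchy--Schwarz with unit variances gives $\kappa_X\le1$. Equality would force $F_X^{-1}=\Phi^{-1}$, i.e.\ Gaussian $X$, which is impossible for bounded $X$. Positivity holds because both quantile functions are nondecreasing and non-constant with mean zero, so their covariance under $U\sim\mathrm{Unif}(0,1)$ is positive. Second, I expect the main technical obstacle to be justifying that the $O(\rho)$ remainders stay $O(\rho)$ after two $\delta$-derivatives. This includes the tails of $F_{Y_\rho}^{-1}(u)$ near $u\in\{0,1\}$. I would handle it by working in the $y$-variable, writing the cross term as $\mathbb E[F_X^{-1}(F_{Y_\rho}(Y_\rho))\,\eta_\gamma(Y_\rho)]$. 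There $F_X^{-1}$ is bounded, and the polynomial-in-$y$ remainders of $\partial_\delta^k\eta_\gamma$ are integrable against the Gaussian-like density $p_\rho$.
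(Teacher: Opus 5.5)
Your proposal is correct and follows the paper's proof essentially step for step. It uses the low-SNR density expansion and Fisher information for $F_\rho$, the MMSE orthogonality identity for $\mathcal R_\rho$, and, for $G_\rho$, the one-dimensional quantile representation justified by monotonicity of $\eta_\gamma$, the linearization $\eta_\gamma(y)\approx\sqrt\gamma\,y$, and the cross-moment $\kappa_X=\int_0^1F_X^{-1}\Phi^{-1}\,\dd u$, with the same Cauchy--Schwarz and monotone-covariance argument for $\kappa_X\in(0,1)$.

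The differences are only technical:
\begin{itemize}
\item The paper controls $m_\beta$ and its $\beta$-derivatives uniformly in $y$ through the identity $\partial_\beta\mathbb E[f(X)\mid Y_\beta=y]=\mathrm{Cov}(f(X),\,yX-\beta X^2\mid Y_\beta=y)$. This gives bounds of the form $C(1+y^2)$ and avoids the ratio expansion, which breaks down when $\sqrt\gamma\,|y|$ is not small.
\item For the quantiles, the paper only needs $\|F_{Y_\rho}^{-1}-\Phi^{-1}\|_{L^2(0,1)}=W_2(P_\rho,\mathcal N(0,1))\le\sqrt\rho$, which follows directly from the coupling $(Y_\rho,N)$. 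After multiplication by $\sqrt{\rho(1+\delta)}$ this already yields an $O(\rho)$ error. Your stronger $O(\rho)$ quantile claim, which you justify only by a CDF expansion that does not control the tails, is therefore unnecessary.
\end{itemize}
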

Appendix~\ref{app:theory} provides the proof of Theorem~\ref{thm:theory:weak}.

\vsp
\paragraph{Interpretation.}
Since $F_\rho$ is minimized at the true component gain ($\delta=0$), its curvature
$F_\rho''(0)$ measures the local sensitivity of measurement matching to a
relative operator error. More precisely, it is the Fisher information about
$\delta$ contained in one degraded observation. Thus, as $\rho$ decreases, the operator's action on that component
rapidly becomes difficult to estimate from degraded data alone.
Its importance for reconstruction decreases more slowly. The excess MSE caused
by the same relative operator error has curvature $\Theta(\rho)$, whereas the
measurement information is only $\Theta(\rho^2)$. Hence the degraded data can
become weakly informative about an operator component while errors in that
component still matter after inversion. 

The clean-side objective $G_\rho$ remains sensitive in this regime, with slope
and curvature of order $\sqrt{\rho}$ but this requires a cautious interpretation. 
$G$ is sensitive in the low-SNR regime, but it is biased since like any point-estimate restorer, including DPIR, the posterior mean does not
produce images distributed like clean ones, even with the true operator.
$G$ should thus be seen as a prior on components that the degraded data barely
constrain. It acts as a selection criterion among operators that $F$ finds nearly equivalent,
encouraging solutions with a desirable property: reconstructions
should be distributed like clean images, leading in practice to better
estimation~(see Section~\ref{sec:exp}).

In Appendix~\ref{subsec:unbiased_exp}, we study an unbiased but more expensive
variant, comparing restorations of real measurements with restorations of clean
images degraded by the candidate operator. It performs comparably, suggesting that
the bias of $G$ is not harmful.

\section{Experiments}\label{sec:exp}\label{sec:experiments}
\newcommand{\tbd}{\textsc{tbd}}
\newcommand{\best}[1]{\textbf{#1}}

We evaluate reconstruction-aware matching on four problems of increasing
realism. We follow the unpaired setting of~\citet{meanti2025unsupervised}, but
make it harder in every respect: nine uniform degradations instead of two, a
measured lens PSF field with strong chromatic aberration, blind
super-resolution without any high-resolution reference, and a real lens
calibrated from only fifteen photographs of a single scene, a simple setup that
does not require extensive data acquisition.

\vsp
\paragraph{Setup.}
The main controlled comparison is with degraded-side matching alone, i.e., the
approach of~\citet{meanti2025unsupervised} implemented in our pipeline with the
reconstruction branch disabled ($\lambda=0$). The configuration of this
\emph{measurement-only} baseline is fixed first and reused unchanged for our
method ($\lambda=1$), so that the reconstruction branch is the only
difference. All experiments share the same optimization recipe: the optimizer,
learning rates, batch size, network architectures, reconstruction weight
$\lambda=1$, and solver settings are identical across tasks, and the
task-specific settings (Table~\ref{tab:appendix:settings}) mainly concern the
operator and the data, such as the kernel size, boundary handling, and noise
level. We report the kernel error, i.e., the MAE between the estimated and
true kernels up to the translation ambiguity
(Appendix~\ref{sec:appendix:centering}), and the quality
of the restorations obtained with DPIR~\citep{zhang2021plug}, which measures
what an estimate is worth after inversion; restoration with the true operator
serves as an upper bound. All results use the last checkpoint. All experiments run on a single RTX 6000 Ada GPU or an RTX 6000 PRO.
Details are given in Appendix~\ref{sec:appendix:exp}. Our code and the scripts to reproduce the results are provided
as supplementary material.

\vsp
\paragraph{Summary of results.}
In all three quantitative settings, reconstruction-aware matching yields more
accurate operators and better restorations than measurement-only matching,
with the largest gain for the most complex operator: it closes $74\%$ of the
PSNR gap to the true operator on uniform blur, and turns a failed estimate of
the chromatic PSF field ($23.9$~dB) into one within $3.1$~dB of the true field.
In single-image blind super-resolution, it outperforms state-of-the-art methods
specialized for this problem in PSNR, with a gap to the true operator about
$40\%$ smaller than that of the strongest one.

\subsection{Spatially uniform blur}
\label{sec:exp:uniform}

We consider nine degradations of FFHQ-256 images
(Appendix~\ref{sec:appendix:kernels}): three irregular motion blurs, four
analytic PSFs, and two channel-dependent operators. The clean and degraded training sets contain $1000$
images each, drawn from disjoint partitions, with noise level $\sigma=0.02$.
Each operator is estimated as a free $33\times33$ kernel, with one kernel per
channel for the channel-dependent operators.

Reconstruction-aware matching reduces the mean kernel error by $28\%$
($2.16\rightarrow1.56$), reduces the PSNR gap to the true operator from $0.61$
to $0.16$~dB, and improves LPIPS accordingly
(Table~\ref{tab:exp:uniform-summary}, left). The gain is systematic
rather than driven by a few kernels: restoration improves on eight of the nine
degradations (paired sign-flip test, $p=0.008$), and comes within $0.1$~dB of
the true operator on five of them.
The only exception is the broad $\sigma=3$ Gaussian, for which both estimates
lose about $0.3$~dB. Per-degradation results are given in
Appendix~\ref{sec:appendix:exp:uniform}.

\begin{table}[t]
\caption{\textbf{Deblurring.} Uniform blur on FFHQ-256 (mean over the nine
degradations) and measured chromatic PSF field ($8\times8$ grid) on the $76$
DPDD test images. Kernel error: aligned kernel MAE ($\times10^{-4}$); PSNR and
LPIPS; SSIM is reported in
Appendix~\ref{sec:appendix:exp}. Restormer and OmniLens are supervised
single-image restoration networks: they output a restored image rather than an
operator (hence the dashes), are applied directly to the degraded images and are run on the PSF field only.}
\label{tab:exp:uniform-summary}
\label{tab:exp:nu}
\centering
\small
\setlength{\tabcolsep}{5pt}
\begin{tabular}{@{}lrrrcrrr@{}}
\toprule
& \multicolumn{3}{c}{Uniform blur (nine kernels)}
& & \multicolumn{3}{c}{Measured chromatic PSF field} \\
\cmidrule(lr){2-4}\cmidrule(l){6-8}
Method & Kernel err.\ $\downarrow$ & PSNR $\uparrow$ & LPIPS $\downarrow$
& & Kernel err.\ $\downarrow$ & PSNR $\uparrow$ & LPIPS $\downarrow$ \\
\midrule
Restormer~\citep{zamir2022restormer} & --- & --- & --- & & --- & 28.57 & 0.205 \\
OmniLens~\citep{jiang2024flexible} & --- & --- & --- & & --- & 29.06 & 0.203 \\
\midrule
Measurement-only & 2.16 & 31.57 & 0.110 & & 8.68 & 23.85 & 0.207 \\
Ours & \best{1.56} & \best{32.02} & \best{0.106}
     & & \best{5.52} & \best{32.41} & \best{0.132} \\
\midrule
True operator & --- & 32.18 & 0.105 & & --- & 35.52 & 0.111 \\
\bottomrule
\end{tabular}
\end{table}

\subsection{Measured PSF field with chromatic aberration}
\label{sec:exp:nu}

We next estimate the PSF field of a real wide-aperture lens, a Canon EF
24\,mm f/1.4L USM at f/1.4 measured by~\citet{bauer2018automatic}, modeled as
an $8\times8$ grid of $27\times27$ RGB kernels. Towards the corners, the PSFs become broader, elongated, with strong chromatic aberrations 
(Appendix~\ref{sec:appendix:kernels:nu}). The
two unpaired sets come from different corpora: measurements are built
from DIV2K~\citep{agustsson2017ntire}, and clean images come from DPDD~\citep{abuolaim2020defocus}.

Here, measurement-only matching performs poorly ($23.85$~dB, Table~\ref{tab:exp:nu}, right),
whereas our approach
reaches $32.41$~dB, within $3.1$~dB of the true field.
Figure~\ref{fig:psf-field-pair} in Appendix~\ref{sec:appendix:exp:nu} shows
why. The measurement-only
PSFs are over-smoothed, in particular along the borders of the field, where the PSF is broadest.
This is consistent with Section~\ref{sec:theory}: a broad PSF strongly
attenuates many components, which the degraded data barely constrain and on
which the reconstruction-side prior acts.

Table~\ref{tab:exp:nu} also reports two supervised networks that
restore the image directly, without estimating an operator:
Restormer~\citep{zamir2022restormer}, applied with its
defocus-deblurring model, which was trained on DPDD, and
OmniLens~\citep{jiang2024flexible}, trained on the PSF fields of a large
library of automatically designed lenses. They reach $28.57$ and $29.06$~dB, respectively.
Restormer targets a different blur, defocus rather than lens aberrations, but
OmniLens is trained precisely for lens aberrations; neither matches a
calibrated operator for this lens.

\subsection{Blind super-resolution}
\label{sec:exp:sr}

We evaluate blind $\times2$ super-resolution on the $100$ images of
DIV2KRK~\citep{bell2019kernelgan}, each degraded by a different anisotropic
kernel that must be estimated from the image alone. Following~\citet{bell2019kernelgan}, crops of the
low-resolution image serve as degraded samples and larger crops of the same
image as clean ones, under the assumption of cross-scale patch recurrence, so
that neither the true kernel nor any high-resolution image is used for training. We compare
with three state-of-the-art methods specialized for this problem:
KernelGAN~\citep{bell2019kernelgan}, the supervised DCLS~\citep{luo2022deep},
and DIP-DKP~\citep{yang2024dynamic}. Our method, by contrast, is not
specialized: it uses the same objective and restorer as for deblurring.

\vsp
\paragraph{Comparison setup with a common Gaussian prior.}
All three baselines favor compact, near-Gaussian kernels: DIP-DKP draws its
candidate kernels from an anisotropic Gaussian family, KernelGAN penalizes
non-compact and off-center kernels, and DCLS is trained on synthetic Gaussian
kernels. To factor out the effect of such priors in the evaluation, we apply the same Gaussian
prior to every method, ours included: each estimated kernel is centered, its
border is zeroed, and it is replaced by the anisotropic Gaussian with the same
first two moments, before restoration with the same DPIR solver. This
post-processing never
degrades a method on average. It also explains why our KernelGAN results are
higher than those reported by~\citet{bell2019kernelgan}. Results without post-processing, where our
method also yields the best PSNR, are given in
Appendix~\ref{sec:appendix:exp:sr:full}.

\vsp
\paragraph{Results.}
Although not designed for super-resolution, our method achieves the best mean
PSNR (Table~\ref{tab:exp:sr}), $1.04$~dB above measurement-only matching, with
a gap to the true operator of $0.6$~dB against $1.1$~dB for the strongest
baseline,
DIP-DKP. It outperforms KernelGAN and DCLS on $82$ and $97$ of the $100$
images, respectively. Against DIP-DKP, the per-image comparison is close to
even ($55/100$), but the mean gain is statistically significant ($+0.45$~dB,
$p=2\cdot10^{-4}$ two-sided
paired permutation test): our advantage comes from avoiding failures rather than from
winning most images. Consistently, the errors of the baselines grow with the
sharpness of the true kernel, whereas ours depends only weakly on its shape.

\begin{table}[t]
\caption{\textbf{Blind super-resolution on DIV2KRK ($\times2$)}. All kernels receive the same centering and Gaussian
post-processing and are restored with DPIR. Kernel error:
aligned MAE ($\times10^{-4}$). Bold denotes
best entry and those not significantly different from it according to a paired permutation
   test.}
\label{tab:exp:sr}
\centering
\small
\begin{tabular}{lrrrrr}
\toprule
& Kernel err. $\downarrow$ & PSNR-Y $\uparrow$ & PSNR $\uparrow$ & SSIM $\uparrow$ & LPIPS $\downarrow$ \\
\midrule
KernelGAN                 & 3.42 & 34.22 & 32.46 & 0.8481 & 0.2714 \\
DCLS                      & 6.97 & 33.59 & 31.84 & 0.8160 & 0.3129 \\
DIP-DKP                   & 2.57 & 34.54 & 32.77 & \best{0.8575} & \best{0.2599} \\
\midrule
Measurement-only          & 2.96 & 33.95 & 32.20 & 0.8459 & 0.2677 \\
Ours                      & \best{2.15} & \best{34.99} & \best{33.21} & \best{0.8561} & \best{0.2601} \\
\midrule
True operator             & ---  & 35.62 & 33.81 & 0.8724 & 0.2435 \\
\bottomrule
\end{tabular}
\end{table}

\subsection{Proof of concept for real lens calibration}
\label{sec:exp:real}

Finally, we calibrate a real lens from fifteen photographs of a single textured
outdoor scene, taken with a consumer camera at f/16 (seven frames that are blurry due to diffraction, forming
$\mathcal Y$) and at f/5.6 (eight center crops, which are sharp with no chromatic aberrations, forming $\mathcal X$), without
paired images. The camera is reframed
and rotated between shots, including upside-down, so that the two sets share no
image correspondence and no orientation is tied to either aperture
(Appendix~\ref{sec:appendix:exp:real}).

We estimate a $9\times12$ grid of $13\times13$ RGB kernels, all treated as free
parameters. This generic model ignores that PSFs at nearby field positions are
strongly dependent, and that accurate classical models exist, \emph{e.g.}, for lateral
chromatic aberration~\citep{kang2007automatic}. Yet, without relying on any
such model, our method recovers a plausible PSF field
(Appendix~\ref{sec:appendix:exp:real}). Since no ground truth is available, we compare
restorations visually with two single-image methods: the two-step method
of~\citet{eboli2022fast}, which estimates local Gaussian PSFs and corrects
lateral chromatic aberration, and OmniLens~\citep{jiang2024flexible}, a network
already used in Section~\ref{sec:exp:nu}. Visual results are presented in Figure~\ref{fig:exp:real} and in Appendix~\ref{sec:appendix:exp:real}.
The full images are provided as supplementary material.

\begin{figure}[t]
\centering
   \includegraphics[width=0.9\textwidth]{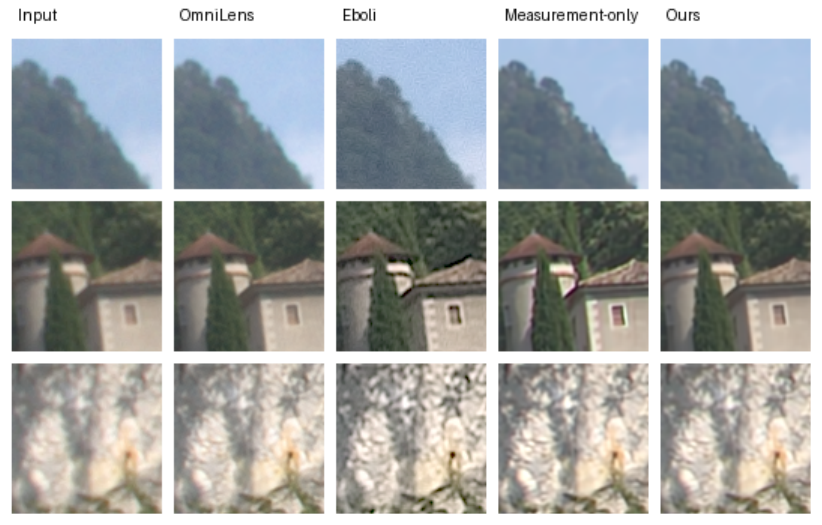}
   \vspace*{-0.4cm}
   \caption{\textbf{Real lens calibration.} Chromatic aberrations are best removed by~\citet{eboli2022fast} but this method does not perform as well as others for deblurring, sometimes producing ringing (middle row) or residual noise artefacts (top row). Measurement-only sometimes fails (middle row). OmniLens is close to our approach, with slightly more residual chromatic aberrations (top row).}
\label{fig:exp:real}
\end{figure}

\section{Concluding Remarks}\label{sec:ccl}
We have introduced a new approach for learning explicit imaging operators from
unpaired data. By evaluating the same candidate operator both before and after
inversion, the method complements degraded-side matching where, as our analysis
shows, the measurements weakly constrain the forward model. Across various operator learning tasks, this improves
estimation and typically reconstruction quality, with the largest
gains in the high-dimensional space-varying setting. The main limitations are
the additional calibration-time cost~(about $\times 6$, see Appendix~\ref{sec:appendix:exp:analysis}) and the dependence on the
chosen reconstruction algorithm used to define the clean-side prior.

\subsection*{AI use statement}

We used generative AI tools, including ChatGPT and Claude, during the
development and preparation of this work. In accordance with the ICLR 2027 AI
Policy for Authors, we disclose their use in several parts of the research
workflow.

Generative AI was used to assist with the implementation of the method and the
experimental workflow. An initial version of the experimental codebase was
written without generative AI assistance; Claude Code was subsequently used to
help modify, debug, and improve the implementation and to assist with running
experiments. We also used LLMs to discuss and refine aspects of the methodology
and theoretical analysis, including checking derivations and improving the
presentation of mathematical arguments.

We additionally used ChatGPT and Claude for tasks for which disclosure is
recommended by the policy: drafting and editing portions of the manuscript,
improving clarity and organization, reviewing the paper for missing
reproducibility details, and searching for potentially relevant prior work.
LLM-assisted literature searches were used as a discovery tool only; relevant
references and claims were subsequently checked by the authors against the
original papers.

All AI-assisted code was reviewed and tested by the authors, and experimental
results were obtained from the resulting implementation rather than generated
by an LLM. Mathematical statements and derivations suggested or revised with
LLM assistance were checked by the authors. The authors reviewed all
AI-assisted text and remain responsible for the final content of the paper,
including its code, experiments, mathematical claims, citations, and
conclusions.

\subsubsection*{Acknowledgments}
This work was supported by ERC grant number 101087696 (APHELEIA project). This work was
granted access to the HPC resources of IDRIS under the allocation AD011015445 made by GENCI.

\bibliography{iclr2027_conference}

\begin{thebibliography}{55}
\providecommand{\natexlab}[1]{#1}
\providecommand{\url}[1]{\texttt{#1}}
\expandafter\ifx\csname urlstyle\endcsname\relax
  \providecommand{\doi}[1]{doi: #1}\else
  \providecommand{\doi}{doi: \begingroup \urlstyle{rm}\Url}\fi

\bibitem[Abuolaim \& Brown(2020)Abuolaim and Brown]{abuolaim2020defocus}
Abdullah Abuolaim and Michael~S. Brown.
\newblock Defocus deblurring using dual-pixel data.
\newblock In \emph{European Conference on Computer Vision (ECCV)}, 2020.

\bibitem[Agustsson \& Timofte(2017)Agustsson and Timofte]{agustsson2017ntire}
Eirikur Agustsson and Radu Timofte.
\newblock {NTIRE} 2017 challenge on single image super-resolution: Dataset and
  study.
\newblock In \emph{Conference on Computer Vision and Pattern Recognition (CVPR)
  Workshop}, 2017.

\bibitem[Bauer et~al.(2018)Bauer, Volchkov, Hirsch, and
  Sch{\"o}lkopf]{bauer2018automatic}
Matthias Bauer, Valentin Volchkov, Michael Hirsch, and Bernhard Sch{\"o}lkopf.
\newblock Automatic estimation of modulation transfer functions.
\newblock In \emph{IEEE International Conference on Computational Photography},
  2018.

\bibitem[Bell-Kligler et~al.(2019)Bell-Kligler, Shocher, and
  Irani]{bell2019kernelgan}
Sefi Bell-Kligler, Assaf Shocher, and Michal Irani.
\newblock Blind super-resolution kernel estimation using an internal-{GAN}.
\newblock In \emph{Adv. Neural Information Processing Systems (NeurIPS)}, 2019.

\bibitem[Bora et~al.(2018)Bora, Price, and Dimakis]{bora2018ambientgan}
Ashish Bora, Eric Price, and Alexandros~G. Dimakis.
\newblock {AmbientGAN}: Generative models from lossy measurements.
\newblock In \emph{International Conference on Learning Representations
  (ICLR)}, 2018.

\bibitem[Chen et~al.(2025)Chen, Li, Dai, Gu, and Xue]{chen2025physics}
Liqun Chen, Yuxuan Li, Jun Dai, Jinwei Gu, and Tianfan Xue.
\newblock A physics-informed blur learning framework for imaging systems.
\newblock In \emph{Conference on Computer Vision and Pattern Recognition
  (CVPR)}, 2025.

\bibitem[Chen et~al.(2020)Chen, Han, Dai, Jia, Liu, Liu, Zou, Xu, Liu, and
  Tian]{chen20unsup}
Shuaijun Chen, Zhen Han, Enyan Dai, Xu~Jia, Ziluan Liu, Xing Liu, Xueyi Zou,
  Chunjing Xu, Jianzhuang Liu, and Qi~Tian.
\newblock Unsupervised image super-resolution with an indirect supervised path.
\newblock In \emph{Conference on Computer Vision and Pattern Recognition (CVPR)
  Workshop}, 2020.

\bibitem[Chen et~al.(2018)Chen, Badrinarayanan, Lee, and
  Rabinovich]{chen2018gradnorm}
Zhao Chen, Vijay Badrinarayanan, Chen-Yu Lee, and Andrew Rabinovich.
\newblock {GradNorm}: Gradient normalization for adaptive loss balancing in
  deep multitask networks.
\newblock In \emph{International Conference on Machine Learning (ICML)}, 2018.

\bibitem[Chung et~al.(2023)Chung, Kim, Kim, and Ye]{chung2023blinddps}
Hyungjin Chung, Jeongsol Kim, Sehui Kim, and Jong~Chul Ye.
\newblock Parallel diffusion models of operator and image for blind inverse
  problems.
\newblock In \emph{Conference on Computer Vision and Pattern Recognition
  (CVPR)}, 2023.

\bibitem[Cover \& Thomas(2006)Cover and Thomas]{cover2006elements}
Thomas~M. Cover and Joy~A. Thomas.
\newblock \emph{Elements of Information Theory}.
\newblock Wiley-Interscience, Hoboken, NJ, 2 edition, 2006.
\newblock ISBN 978-0-471-24195-9.

\bibitem[Dytso et~al.(2023)Dytso, Poor, and Shamai]{dytso2023conditional}
Alex Dytso, H.~Vincent Poor, and Shlomo Shamai.
\newblock Conditional mean estimation in gaussian noise: A meta derivative
  identity with applications.
\newblock \emph{IEEE Transactions on Information Theory}, 69\penalty0
  (3):\penalty0 1883--1898, 2023.

\bibitem[Eboli et~al.(2022)Eboli, Morel, and Facciolo]{eboli2022fast}
Thomas Eboli, Jean-Michel Morel, and Gabriele Facciolo.
\newblock Fast two-step blind optical aberration correction.
\newblock In \emph{European Conference on Computer Vision (ECCV)}, 2022.

\bibitem[Efrat et~al.(2013)Efrat, Glasner, Apartsin, Nadler, and
  Levin]{efrat2013accurate}
Netalee Efrat, Daniel Glasner, Alexander Apartsin, Boaz Nadler, and Anat Levin.
\newblock Accurate blur models vs. image priors in single image
  super-resolution.
\newblock In \emph{International Conference on Computer Vision (ICCV)}, 2013.

\bibitem[Guan et~al.(2024)Guan, Iqbal, Davenport, and
  Masood]{guan2024modelmismatch}
Peimeng Guan, Naveed Iqbal, Mark~A. Davenport, and Mudassir Masood.
\newblock Solving inverse problems with model mismatch using untrained neural
  networks within model-based architectures.
\newblock \emph{Transactions on Machine Learning Research (TMLR)}, 2024.

\bibitem[Guo et~al.(2005)Guo, Shamai, and Verd{\'u}]{guo2005}
Dongning Guo, Shlomo Shamai, and Sergio Verd{\'u}.
\newblock Mutual information and minimum mean-square error in {G}aussian
  channels.
\newblock \emph{IEEE Transactions on Information Theory}, 51\penalty0
  (4):\penalty0 1261--1282, 2005.

\bibitem[Guo et~al.(2011)Guo, Wu, Shamai, and Verd{\'u}]{guo2011}
Dongning Guo, Yihong Wu, Shlomo Shamai, and Sergio Verd{\'u}.
\newblock Estimation in {G}aussian noise: Properties of the minimum mean-square
  error.
\newblock \emph{IEEE Transactions on Information Theory}, 57\penalty0
  (4):\penalty0 2371--2385, 2011.

\bibitem[Jiang et~al.(2026)Jiang, Gao, Gao, Yi, Qian, Shi, Yang, Sun, Wang, and
  Bai]{jiang2024flexible}
Qi~Jiang, Yao Gao, Shaohua Gao, Zhonghua Yi, Xiaolong Qian, Hao Shi, Kailun
  Yang, Lei Sun, Kaiwei Wang, and Jian Bai.
\newblock {OmniLens}: Towards universal lens aberration correction via
  {LensLib}-to-specific domain adaptation.
\newblock \emph{Optics \& Laser Technology}, 198:\penalty0 114858, 2026.

\bibitem[Kang(2007)]{kang2007automatic}
Sing~Bing Kang.
\newblock Automatic removal of chromatic aberration from a single image.
\newblock In \emph{Conference on Computer Vision and Pattern Recognition
  (CVPR)}, 2007.

\bibitem[Kee et~al.(2011)Kee, Paris, Chen, and Wang]{kee2011modeling}
Eric Kee, Sylvain Paris, Simon Chen, and Jue Wang.
\newblock Modeling and removing spatially-varying optical blur.
\newblock In \emph{IEEE International Conference on Computational Photography},
  2011.

\bibitem[Laroche et~al.(2024)Laroche, Almansa, and Coupete]{laroche2024fast}
Charles Laroche, Andr{\'e}s Almansa, and Eva Coupete.
\newblock Fast diffusion {EM}: A diffusion model for blind inverse problems
  with application to deconvolution.
\newblock In \emph{IEEE/CVF Winter Conference on Applications of Computer
  Vision (WACV)}, 2024.

\bibitem[Lee et~al.(2026)Lee, Lee, and Choi]{lee2026uotip}
Donggyu Lee, Taekyung Lee, and Jaewoong Choi.
\newblock {UOTIP}: Unbalanced optimal transport map for unpaired inverse
  problems.
\newblock In \emph{International Conference on Machine Learning (ICML)}, 2026.

\bibitem[Levac et~al.(2025)Levac, Jalal, Ramchandran, and
  Tamir]{levac2025double}
Brett Levac, Ajil Jalal, Kannan Ramchandran, and Jonathan~I. Tamir.
\newblock Double blind imaging with generative modeling.
\newblock \emph{arXiv preprint arXiv:2503.21501}, 2025.

\bibitem[Li et~al.(2025)Li, Gan, and Kamilov]{li2025blindpnp}
Anqi Li, Weijie Gan, and Ulugbek~S. Kamilov.
\newblock Plug-and-play posterior sampling for blind inverse problems.
\newblock In \emph{2025 IEEE Statistical Signal Processing Workshop (SSP)},
  2025.

\bibitem[Lim et~al.(2020)Lim, Park, Lee, Chang, Sim, and Ye]{lim2020cyclegan}
Sungjun Lim, Hyoungjun Park, Sang-Eun Lee, Sunghoe Chang, Byeongsu Sim, and
  Jong~Chul Ye.
\newblock {CycleGAN} with a blur kernel for deconvolution microscopy: Optimal
  transport geometry.
\newblock \emph{IEEE Transactions on Computational Imaging}, 6:\penalty0
  1127--1138, 2020.

\bibitem[Lipman et~al.(2023)Lipman, Chen, Ben-Hamu, Nickel, and
  Le]{lipman2023flow}
Yaron Lipman, Ricky T.~Q. Chen, Heli Ben-Hamu, Maximilian Nickel, and Matt Le.
\newblock Flow matching for generative modeling.
\newblock In \emph{International Conference on Learning Representations
  (ICLR)}, 2023.

\bibitem[Liu et~al.(2023)Liu, Gong, and Liu]{liu2023flow}
Xingchao Liu, Chengyue Gong, and Qiang Liu.
\newblock Flow straight and fast: Learning to generate and transfer data with
  rectified flow.
\newblock In \emph{International Conference on Learning Representations
  (ICLR)}, 2023.

\bibitem[Lunz et~al.(2021)Lunz, Hauptmann, Tarvainen, Sch{\"o}nlieb, and
  Arridge]{lunz2021learned}
Sebastian Lunz, Andreas Hauptmann, Tanja Tarvainen, Carola-Bibiane
  Sch{\"o}nlieb, and Simon Arridge.
\newblock On learned operator correction in inverse problems.
\newblock \emph{SIAM Journal on Imaging Sciences}, 14\penalty0 (1):\penalty0
  92--127, 2021.

\bibitem[Luo et~al.(2023)Luo, Hu, Zhang, Sun, Li, and
  Zhang]{luo2023diffinstruct}
Weijian Luo, Tianyang Hu, Shifeng Zhang, Jiacheng Sun, Zhenguo Li, and Zhihua
  Zhang.
\newblock Diff-instruct: A universal approach for transferring knowledge from
  pre-trained diffusion models.
\newblock In \emph{Adv. Neural Information Processing Systems (NeurIPS)}, 2023.

\bibitem[Luo et~al.(2022{\natexlab{a}})Luo, Huang, Li, Wang, and
  Tan]{luo2022learning}
Zhengxiong Luo, Yan Huang, Shang Li, Liang Wang, and Tieniu Tan.
\newblock Learning the degradation distribution for blind image
  super-resolution.
\newblock In \emph{Conference on Computer Vision and Pattern Recognition
  (CVPR)}, 2022{\natexlab{a}}.

\bibitem[Luo et~al.(2022{\natexlab{b}})Luo, Huang, Yu, Li, Fan, and
  Liu]{luo2022deep}
Ziwei Luo, Haibin Huang, Lei Yu, Youwei Li, Haoqiang Fan, and Shuaicheng Liu.
\newblock Deep constrained least squares for blind image super-resolution.
\newblock In \emph{Conference on Computer Vision and Pattern Recognition
  (CVPR)}, 2022{\natexlab{b}}.

\bibitem[Maeda(2020)]{maeda2020unpaired}
Shunta Maeda.
\newblock Unpaired image super-resolution using pseudo-supervision.
\newblock In \emph{Conference on Computer Vision and Pattern Recognition
  (CVPR)}, 2020.

\bibitem[Meanti et~al.(2025)Meanti, Ryckeboer, Arbel, and
  Mairal]{meanti2025unsupervised}
Giacomo Meanti, Thomas Ryckeboer, Michael Arbel, and Julien Mairal.
\newblock Unsupervised imaging inverse problems with diffusion distribution
  matching.
\newblock In \emph{International Conference on Computer Vision (ICCV)}, 2025.

\bibitem[Monga et~al.(2021)Monga, Li, and Eldar]{monga2021algorithm}
Vishal Monga, Yuelong Li, and Yonina~C. Eldar.
\newblock Algorithm unrolling: Interpretable, efficient deep learning for
  signal and image processing.
\newblock \emph{IEEE Signal Processing Magazine}, 38\penalty0 (2):\penalty0
  18--44, 2021.
\newblock \doi{10.1109/MSP.2020.3016905}.

\bibitem[Murata et~al.(2023)Murata, Saito, Lai, Takida, Uesaka, Mitsufuji, and
  Ermon]{murata2023gibbsddrm}
Naoki Murata, Koichi Saito, Chieh-Hsin Lai, Yuhta Takida, Toshimitsu Uesaka,
  Yuki Mitsufuji, and Stefano Ermon.
\newblock {GibbsDDRM}: A partially collapsed gibbs sampler for solving blind
  inverse problems with denoising diffusion restoration.
\newblock In \emph{International Conference on Machine Learning (ICML)}, 2023.

\bibitem[Peyr{\'e} \& Cuturi(2019)Peyr{\'e} and Cuturi]{peyre2019computational}
Gabriel Peyr{\'e} and Marco Cuturi.
\newblock Computational optimal transport.
\newblock \emph{Foundations and Trends in Machine Learning}, 11\penalty0
  (5--6), 2019.

\bibitem[Poole et~al.(2023)Poole, Jain, Barron, and
  Mildenhall]{poole2022dreamfusion}
Ben Poole, Ajay Jain, Jonathan~T. Barron, and Ben Mildenhall.
\newblock {DreamFusion}: Text-to-3d using 2d diffusion.
\newblock In \emph{International Conference on Learning Representations
  (ICLR)}, 2023.

\bibitem[Prelov \& Verd{\'u}(2004)Prelov and Verd{\'u}]{prelov2004}
Vyacheslav~V. Prelov and Sergio Verd{\'u}.
\newblock Second-order asymptotics of mutual information.
\newblock \emph{IEEE Transactions on Information Theory}, 50\penalty0
  (8):\penalty0 1567--1580, 2004.
\newblock \doi{10.1109/TIT.2004.831784}.

\bibitem[Romano et~al.(2017)Romano, Elad, and Milanfar]{romano2017little}
Yaniv Romano, Michael Elad, and Peyman Milanfar.
\newblock The little engine that could: Regularization by denoising ({RED}).
\newblock \emph{SIAM Journal on Imaging Sciences}, 10\penalty0 (4):\penalty0
  1804--1844, 2017.

\bibitem[Sanghvi et~al.(2024)Sanghvi, Chi, and Chan]{sanghvi2024kernel}
Yash Sanghvi, Yiheng Chi, and Stanley~H. Chan.
\newblock Kernel diffusion: An alternate approach to blind deconvolution.
\newblock In \emph{European Conference on Computer Vision (ECCV)}, 2024.

\bibitem[Schuler et~al.(2012)Schuler, Hirsch, Harmeling, and
  Sch{\"o}lkopf]{schuler2012blind}
Christian~J. Schuler, Michael Hirsch, Stefan Harmeling, and Bernhard
  Sch{\"o}lkopf.
\newblock Blind correction of optical aberrations.
\newblock In \emph{European Conference on Computer Vision (ECCV)}, 2012.

\bibitem[Shaban et~al.(2019)Shaban, Cheng, Hatch, and
  Boots]{shaban2019truncated}
Amirreza Shaban, Ching-An Cheng, Nathan Hatch, and Byron Boots.
\newblock Truncated back-propagation for bilevel optimization.
\newblock In \emph{International Conference on Artificial Intelligence and
  Statistics (AISTATS)}, 2019.

\bibitem[Sim et~al.(2020)Sim, Oh, Kim, Jung, and Ye]{sim2020optimal}
Byeongsu Sim, Gyutaek Oh, Jeongsol Kim, Chanyong Jung, and Jong~Chul Ye.
\newblock Optimal transport driven {CycleGAN} for unsupervised learning in
  inverse problems.
\newblock \emph{SIAM Journal on Imaging Sciences}, 13\penalty0 (4):\penalty0
  2281--2306, 2020.

\bibitem[Speiser et~al.(2021)Speiser, M{\"u}ller, Hoess, Matti, Obara, Legant,
  Kreshuk, Macke, Ries, and Turaga]{speiser2021deep}
Artur Speiser, Lucas-Raphael M{\"u}ller, Philipp Hoess, Ulf Matti,
  Christopher~J Obara, Wesley~R Legant, Anna Kreshuk, Jakob~H Macke, Jonas
  Ries, and Srinivas~C Turaga.
\newblock Deep learning enables fast and dense single-molecule localization
  with high accuracy.
\newblock \emph{Nature methods}, 18\penalty0 (9):\penalty0 1082--1090, 2021.

\bibitem[Tang et~al.(2024)Tang, Hu, Gu, and Sun]{tang2024residual}
Xiaole Tang, Xin Hu, Xiang Gu, and Jian Sun.
\newblock Residual-conditioned optimal transport: towards structure-preserving
  unpaired and paired image restoration.
\newblock In \emph{International Conference on Machine Learning (ICML)}, 2024.

\bibitem[Venkatakrishnan et~al.(2013)Venkatakrishnan, Bouman, and
  Wohlberg]{venkatakrishnan2013plug}
Singanallur~V. Venkatakrishnan, Charles~A. Bouman, and Brendt Wohlberg.
\newblock Plug-and-play priors for model based reconstruction.
\newblock In \emph{Global Conference on Signal and Information Processing
  (GlobalSIP)}, 2013.

\bibitem[Wang et~al.(2021)Wang, Xie, Dong, and Shan]{wang2021realesrgan}
Xintao Wang, Liangbin Xie, Chao Dong, and Ying Shan.
\newblock Real-{ESRGAN}: Training real-world blind super-resolution with pure
  synthetic data.
\newblock In \emph{International Conference on Computer Vision (ICCV)
  Workshop}, 2021.

\bibitem[Wang et~al.(2023)Wang, Lu, Wang, Bao, Li, Su, and
  Zhu]{wang2023prolificdreamer}
Zhengyi Wang, Cheng Lu, Yikai Wang, Fan Bao, Chongxuan Li, Hang Su, and Jun
  Zhu.
\newblock {ProlificDreamer}: High-fidelity and diverse text-to-{3D} generation
  with variational score distillation.
\newblock In \emph{Adv. Neural Information Processing Systems (NeurIPS)}, 2023.

\bibitem[Wolf et~al.(2021)Wolf, Lugmayr, Danelljan, Gool, and
  Timofte]{wolf2021deflow}
Valentin Wolf, Andreas Lugmayr, Martin Danelljan, Luc~Van Gool, and Radu
  Timofte.
\newblock {DeFlow}: Learning complex image degradations from unpaired data with
  conditional flows.
\newblock In \emph{Conference on Computer Vision and Pattern Recognition
  (CVPR)}, 2021.

\bibitem[Yang et~al.(2024)Yang, Xia, Li, Huang, Zhang, Liu, Fu, and
  Liu]{yang2024dynamic}
Zhixiong Yang, Jingyuan Xia, Shengxi Li, Xinghua Huang, Shuanghui Zhang, Zhen
  Liu, Yaowen Fu, and Yongxiang Liu.
\newblock A dynamic kernel prior model for unsupervised blind image
  super-resolution.
\newblock In \emph{Conference on Computer Vision and Pattern Recognition
  (CVPR)}, 2024.

\bibitem[Yuan et~al.(2018)Yuan, Liu, Zhang, Zhang, Dong, and
  Lin]{yuan2018unsupervised}
Yuan Yuan, Siyuan Liu, Jiawei Zhang, Yongbing Zhang, Chao Dong, and Liang Lin.
\newblock Unsupervised image super-resolution using cycle-in-cycle generative
  adversarial networks.
\newblock In \emph{Conference on Computer Vision and Pattern Recognition (CVPR)
  Workshop}, 2018.

\bibitem[Zamir et~al.(2020)Zamir, Arora, Khan, Hayat, Khan, Yang, and
  Shao]{zamir2020cycleisp}
Syed~Waqas Zamir, Aditya Arora, Salman Khan, Munawar Hayat, Fahad~Shahbaz Khan,
  Ming-Hsuan Yang, and Ling Shao.
\newblock {CycleISP}: Real image restoration via improved data synthesis.
\newblock In \emph{Conference on Computer Vision and Pattern Recognition
  (CVPR)}, 2020.

\bibitem[Zamir et~al.(2022)Zamir, Arora, Khan, Hayat, Khan, and
  Yang]{zamir2022restormer}
Syed~Waqas Zamir, Aditya Arora, Salman Khan, Munawar Hayat, Fahad~Shahbaz Khan,
  and Ming-Hsuan Yang.
\newblock Restormer: Efficient transformer for high-resolution image
  restoration.
\newblock In \emph{Conference on Computer Vision and Pattern Recognition
  (CVPR)}, 2022.

\bibitem[Zhang et~al.(2021)Zhang, Li, Zuo, Zhang, Gool, and
  Timofte]{zhang2021plug}
Kai Zhang, Yawei Li, Wangmeng Zuo, Lei Zhang, Luc~Van Gool, and Radu Timofte.
\newblock Plug-and-play image restoration with deep denoiser prior.
\newblock \emph{IEEE Transactions on Pattern Analysis and Machinte Intelligence
  (PAMI)}, 2021.

\bibitem[Zhu et~al.(2017)Zhu, Park, Isola, and Efros]{zhu2017cyclegan}
Jun-Yan Zhu, Taesung Park, Phillip Isola, and Alexei~A. Efros.
\newblock Unpaired image-to-image translation using cycle-consistent
  adversarial networks.
\newblock In \emph{International Conference on Computer Vision (ICCV)}, 2017.

\bibitem[Zhu et~al.(2023)Zhu, Zhang, Liang, Cao, Wen, Timofte, and
  Gool]{zhu2023diffpir}
Yuanzhi Zhu, Kai Zhang, Jingyun Liang, Jiezhang Cao, Bihan Wen, Radu Timofte,
  and Luc~Van Gool.
\newblock Denoising diffusion models for plug-and-play image restoration.
\newblock In \emph{IEEE/CVF Conference on Computer Vision and Pattern
  Recognition Workshops (NTIRE)}, 2023.

\end{thebibliography}
\bibliographystyle{iclr2027_conference}

\newpage
\appendix
\section{Operator constraints and kernel centering}
\label{sec:appendix:centering}

Except for the three motion kernels (Appendix~\ref{sec:appendix:kernels}),
all kernel-based operators use the centering procedure described in this
appendix.
Kernels are constrained to be non-negative and
unit-sum with a softplus parametrization, while their absolute position is fixed by a centering convention.

\subsection{Parameterization and translation ambiguity}
\label{sec:appendix:centering:set}
\label{sec:appendix:centering:gauge}

For a single kernel, we optimize an unconstrained tensor
$\theta\in\mathbb R^{h\times w}$ and define
\begin{equation}
    a_\theta=\operatorname{softplus}(\theta),
    \qquad
    \bar k_\theta
    =
    \frac{a_\theta}{\sum_u (a_\theta)_u}.
    \label{eq:app:center:param}
\end{equation}
Thus non-negativity and unit sum are enforced by construction. The same parameterization is applied to each kernel for
multi-channel and space-varying operators. The final forward kernel includes
the sub-pixel correction described below and is renormalized afterward.

If the patch distribution and measurement noise are translation-invariant,
shifting the kernel leaves the measurement distribution unchanged, which 
raises a classical translation ambiguity for kernel estimation in blind deblurring tasks.
Following \citet{meanti2025unsupervised}, we therefore fix the kernel position.
Rather than adding their center-of-mass penalty, we explicitly recenter the
kernel after each operator update, as explained below.

\subsection{Centering procedure}
\label{sec:appendix:centering:proc}

\paragraph{Robust center estimate.}
A global center of mass can be sensitive to small amounts of mass far from the
main kernel. We instead use a local mean-shift centroid. Starting from
$\bar k_\theta$, we subtract its median background level, retain the
non-negative residual $K$, initialize $c$ at its global centroid, and iterate
\begin{equation}
    c
    \leftarrow
    \frac{\sum_{u:\|u-c\|_\infty\leq w} K_u\,u}
         {\sum_{u:\|u-c\|_\infty\leq w} K_u}.
    \label{eq:app:center:meanshift}
\end{equation}
We use $w=10$ pixels and five iterations. Restricting the centroid to a local
window reduces the influence of distant residual mass while remaining suitable
for flat or elongated kernels.

\paragraph{Integer recentering.}
Let $\delta$ move $c$ to the geometric center of the support and let
$\delta_{\rm int}$ be its rounding to the nearest integer.
After each operator update,
we circularly shift $\theta$ by $\delta_{\rm int}$. This is a permutation of
its entries and therefore introduces no interpolation. For Adam, the first- and
second-moment buffers are shifted by the same amount so that the optimizer state
remains aligned with the parameters. 
The exponential moving average of the operator parameters is shifted by the
same amount before being updated, so that it never mixes differently
translated kernels.

\paragraph{Sub-pixel correction.}
The residual
$\delta_{\rm sub}=\delta-\delta_{\rm int}$ is at most half a pixel in each
coordinate. At each forward pass, we bilinearly shift the positive pre-kernel
and renormalize:
\begin{equation}
    k_\theta
    =
    \frac{T_{\delta_{\rm sub}}a_\theta}
         {\sum_u (T_{\delta_{\rm sub}}a_\theta)_u},
    \label{eq:app:center:forward-kernel}
\end{equation}
where $T_{\delta_{\rm sub}}$ denotes bilinear translation with zero padding.
The center estimate is detached from the computation graph, so gradients flow
through the shifted kernel values but not through the centering procedure.
Because this residual shift is recomputed at each forward pass and never written
back to $\theta$, 
interpolation errors do not accumulate, although a nonzero residual shift
slightly smooths the kernel.

\subsection{Variants and evaluation}
\label{sec:appendix:centering:variants}

\paragraph{Chromatic operators.}
Centering each color channel independently would remove the relative
displacements that encode chromatic aberration. We therefore estimate a single
shift from the green channel and apply it to all channels, removing only their
common translation.

\paragraph{Space-varying operators.}
For a grid of spatially varying PSFs, a common shift of all kernels corresponds
to the global ambiguity above. In practice, we apply the same centering
convention to each local kernel, using green as the anchor for color PSFs. This
keeps each PSF inside its finite support and fixes a consistent convention
across the grid. 
It also removes spatially varying shifts (geometric distortion), which are
only weakly identifiable from unpaired data; the ground-truth PSFs are
centered in the same way for evaluation.

\paragraph{Evaluation.}
\label{sec:appendix:centering:eval}
We report kernel error after translation alignment: the estimated kernel is
shifted at sub-pixel resolution so that its center of mass matches that of
the ground truth before computing the mean absolute error.
 This factors out the common translation ambiguity and measures
kernel shape more directly. For chromatic operators, alignment is anchored on a
single channel so that errors in relative channel displacement remain
penalized.

\section{IKL gradient estimator and implementation}
\label{sec:appendix:ikl}
For a distribution $P$, draw
$Z_1\sim P$, $Z_0\sim\mathcal N(0,I)$ independently, sample
$t\sim\operatorname{Unif}(0,1)$, and define 
\begin{equation}
    Z_t=(1-t)Z_0+tZ_1.
    \label{eq:appendix:interpolation}
\end{equation}
For fixed $t$, let $P^{(t)}$ denote the law of $Z_t$. The integrated KL
loss~\citep{luo2023diffinstruct} is then
\begin{equation}
    \mathcal D(P\Vert Q) = \IKL(P\Vert Q)
    = \int_0^1 w(t)\,\operatorname{KL}\big(P^{(t)}\Vert Q^{(t)}\big)\,\dd t,
    \label{eq:method:IKL}
\end{equation}
where $w$ is a weighting function specified below.

Conditional flow matching represents $P$ through a velocity field trained with
\begin{equation}
    \mathcal L_{\rm CFM}(v;P)
    =
    \E\!\left[
        \big\|v(Z_t,t)-(Z_1-Z_0)\big\|_2^2
    \right].
    \label{eq:appendix:cfm}
\end{equation}
Its minimizer is the conditional expectation
\begin{equation}
    v_P(z,t) = \E[Z_1-Z_0\mid Z_t=z],
    \label{eq:appendix:velocity}
\end{equation}
which appears in Eq.~(\ref{eq:method:ikl-gradient}).

We choose $w(t)=(1-t)/t^2$. Writing $s_P$ for the score of $P^{(t)}$, the
gradient of each KL term, for a target $Q$ independent of $\theta$, is
\begin{equation}
    \nabla_\theta \operatorname{KL}\big(P_\theta^{(t)}\Vert Q^{(t)}\big)
    = \E\big[(s_{P_\theta}-s_Q)(Z_t)^\top\,\partial_\theta Z_t\big],
\end{equation}
see \citet{luo2023diffinstruct}. Since $s_P-s_Q=\frac{t}{1-t}(v_P-v_Q)$ and
$\partial_\theta Z_t=t\,J_\theta Z_1$, this gradient equals
$\frac{t^2}{1-t}\,\E\big[(J_\theta Z_1)^\top(v_{P_\theta}-v_Q)(Z_t,t)\big]$;
the weight $w$ cancels the factor $t^2/(1-t)$ and yields
Eq.~(\ref{eq:method:ikl-gradient}).

\paragraph{Teacher and auxiliary fields.}
For a fixed target distribution $Q$, we train a teacher field
$v^{\rm tc}$ with Eq.~(\ref{eq:appendix:cfm}) on samples from $Q$ and then
freeze it. The current model distribution $P_\theta$ changes as $\theta$ is
updated, so a second field $v^{\rm aux}$ is trained online with the same CFM
objective on samples $Z_1(\theta)\sim P_\theta$. Thus
$v^{\rm tc}\approx v_Q$ while $v^{\rm aux}\approx v_{P_\theta}$, following
the alternating distribution-matching construction of
\citet{luo2023diffinstruct}, and used by~\citet{meanti2025unsupervised}.

For one draw of $Z_1(\theta)$, $t$ and $Z_0$ (averaged over a mini-batch in
practice), Eq.~(\ref{eq:method:ikl-gradient}) is estimated
by
\begin{equation}
    \widehat g_\theta
    =
    (J_\theta Z_1)^\top
    \left[
        v^{\rm aux}(Z_t,t)-v^{\rm tc}(Z_t,t)
    \right].
    \label{eq:appendix:ikl-mc}
\end{equation}
During the operator update, both velocity predictions are treated as constants
with respect to $\theta$. In practice, the same gradient is obtained by
automatic differentiation of the stop-gradient surrogate
\begin{equation}
    \widehat{\mathcal J}(\theta)
    =
    \left\langle
        \operatorname{sg}\!\left[
            v^{\rm aux}(Z_t,t)-v^{\rm tc}(Z_t,t)
        \right],
        Z_1(\theta)
    \right\rangle,
    \label{eq:appendix:stopgrad}
\end{equation}
where $\operatorname{sg}$ is the stop-gradient operator. 
By the chain rule, $\nabla_\theta\widehat{\mathcal J}(\theta)=\widehat g_\theta$.

Although $Z_t$ depends on $Z_1(\theta)$, this dependence is used only to query
the two velocity fields; their outputs are detached when updating the operator.
This is exact, since the velocity fields enter
Eq.~(\ref{eq:method:ikl-gradient}) only through their values at $Z_t$.
Conversely, when updating $v^{\rm aux}$, the sample $Z_1(\theta)$ is detached
and $\theta$ is held fixed. Hence the alternating updates optimize either the
auxiliary field or the operator, never both through the same backward pass.

\paragraph{The two branches.}
The estimator is identical in the measurement and reconstruction branches. For
measurement matching,
\begin{equation}
    Z_1(\theta)=A_\theta X+\varepsilon,
\end{equation}
the teacher is trained on $P_Y$, and the auxiliary field tracks
$P_{Y,\theta}$. For reconstruction matching,
\begin{equation}
    Z_1(\theta)=R_\theta(Y),
\end{equation}
the teacher is trained on $P_X$, and the auxiliary field tracks $Q_\theta$.
The only difference is therefore the differentiable map through which
$Z_1(\theta)$ depends on the operator parameters; in the reconstruction branch,
this backward pass is truncated as described in
Section~\ref{sec:method:restoration}.

\section{Implementation and task-specific details}
\label{sec:method:details}

This section provides implementation details omitted from
Algorithm~\ref{alg:method}, in addition to 
the IKL estimator and auxiliary-flow training 
described in Appendix~\ref{sec:appendix:ikl}, and the kernel parameterization
and centering procedure in Appendix~\ref{sec:appendix:centering}.

\subsection{Optimization details}
\label{sec:appendix:impl}

\paragraph{Reconstruction warm-up.}
Early in training, $A_\theta$ is still close to its initialization and
$R_\theta(Y)$ is consequently unreliable. We therefore introduce the
reconstruction branch gradually by replacing $\lambda$ in
Eq.~(\ref{eq:method:balance}) with
\begin{equation}
    \lambda(k)
    =
    \min\!\left(1,\frac{k}{K_{\rm warm}}\right)\lambda .
    \label{eq:method:curriculum}
\end{equation}
After $K_{\rm warm}$ operator updates, the target weight $\lambda$ is kept
fixed. The clean-side auxiliary field continues to track the current
reconstruction distribution throughout the warm-up.

\paragraph{Operator averaging.}
We maintain an exponential moving average $\bar\theta$ of the operator
parameters and use $A_{\bar\theta}$ for evaluation. This reduces the
step-to-step variation induced by the stochastic IKL gradients. 
The average is shifted together with the operator at each recentering, so
that it does not mix differently translated kernel representatives; see
Appendix~\ref{sec:appendix:centering}.

\paragraph{Auxiliary-field initialization and reuse.}
The auxiliary fields are initialized from their corresponding frozen teachers,
\[
    v_X^{\rm aux}\leftarrow v_X^{\rm tc},
    \qquad
    v_Y^{\rm aux}\leftarrow v_Y^{\rm tc},
\]
rather than from random weights. This provides a useful initialization for the
online distributions they subsequently track.

Updating $v_Y^{\rm aux}$ only requires synthetic measurements
$A_\theta X+\varepsilon$. For $v_X^{\rm aux}$, we reuse the reconstruction
already computed for the clean-side gradient: after estimating $g_G$,
$R_\theta(Y)$ is detached and used as an auxiliary-flow training sample.
This avoids a second call to the reconstruction solver and results in one
reconstruction per operator-update cycle. All teacher and auxiliary fields use
the conditional flow-matching objective of
Eq.~(\ref{eq:appendix:cfm}), with $t\sim\operatorname{Unif}(0,1)$.

\subsection{Reconstruction and data handling}
\label{sec:appendix:impl:reconstruction}
\label{sec:appendix:impl:data}

\paragraph{DPIR and truncated backpropagation.}
We implement $R_\theta$ with DPIR~\citep{zhang2021plug}, keeping its pretrained
denoiser and iteration schedule fixed during operator learning. We run $K$
iterations and differentiate only through the final $K_{\rm bp}$ ($K=8$ and $K_{\rm bp}=1$ in all experiments; see Appendix~\ref{sec:appendix:exp:common}). The state
after iteration $K-K_{\rm bp}$ is detached and used to initialize the
differentiated part of the solver. The data-fidelity update uses a closed-form
proximal operator when available and conjugate gradients otherwise.

\paragraph{Noise levels.}
The measurement noise is not estimated jointly with the operator. The noise
standard deviation used to synthesize measurements in the $F$ branch and the
noise level supplied to DPIR are fixed to the known value for synthetic data
and treated as fixed hyperparameters for real data. Thus $\theta$ is the only
quantity inferred from the unpaired datasets.

\paragraph{Patch sampling and boundary handling.}
Training uses patches sampled at random spatial locations. The measurement
branch degrades clean patches, while the reconstruction branch restores
degraded patches at the same training resolution. We use replicate padding
unless the task uses a naturally periodic construction, in which case circular
padding is used; blind super-resolution instead applies the operator as a
normalized convolution, since its observed set consists of real crops for which
neither convention is appropriate. The same boundary convention is used by the
forward model and the reconstruction solver. Per-experiment choices are reported in
Appendix~\ref{sec:appendix:exp:common}.

\paragraph{Spatial conditioning.}
Except in blind super-resolution, all teacher and auxiliary velocity fields
are conditioned on the patch location.
They therefore take the form
\begin{equation}
    v(z_t,t,\ell),
\end{equation}
where $\ell$ encodes the normalized spatial coordinates of the patch. This is
important for spatially varying degradation, for which both image and
measurement statistics can vary with position. We use the same conditioning in
the spatially uniform experiments for consistency.

\subsection{Task-specific forward models}
\label{sec:appendix:impl:tasks}

\paragraph{Spatially varying blur.}
Following \citet{meanti2025unsupervised}, we represent the spatially varying PSF
by an $n_h\times n_w$ grid of local kernels. The kernel associated with a patch
is obtained by bilinear interpolation of this grid at the patch center.
Non-negativity, normalization, chromatic anchoring, and centering are described
in Appendix~\ref{sec:appendix:centering}. Our method and the measurement-only
baseline use exactly the same operator parameterization and constraints.

\paragraph{Blind super-resolution.}
Instead of generating synthetic measurements from a clean dataset as in
Eq.~(\ref{eq:method:synthetic}), $A_\theta$ is applied to crops of the observed
low-resolution image, and the resulting twice-degraded distribution is matched
to that of smaller crops of the same image, following the cross-scale
protocol of~\citet{bell2019kernelgan}. Likewise, the reconstruction
branch restores these smaller crops and matches the restorations to crops of
twice their size, which play the role of clean images.

\subsection{Initialization}
\label{sec:appendix:impl:repro}

The unconstrained operator parameters are initialized randomly. Combined with
the softplus-and-normalize parameterization of
Appendix~\ref{sec:appendix:centering:set}, this yields a diffuse initial kernel
without imposing a preferred PSF shape.

\section{Technical details for Section~\ref{sec:theory}}
\label{app:theory}

The results derived here are inspired by the analysis
of minimum mean square error estimators for Gaussian denoising
in the information theory literature~\citep[see][]{guo2011}.

Theorem~\ref{thm:theory:weak} concerns the scalar channel
\begin{equation}
    Y_\rho=\sqrt{\rho}\,X+N,
    \qquad
    N\sim\mathcal N(0,1),
    \qquad
    \E[X]=0,\quad
    \E[X^2]=1.
    \label{eq:app:channel}
\end{equation}
The scalar variable
$X$ represents one component of the clean signal, while $\rho$ measures how
strongly that component remains observable after application of the forward
operator and addition of noise.

We compare the true channel parameter $\rho$ with the candidate
\begin{equation}
    \rho_\delta=\rho(1+\delta).
\end{equation}

For later use, define the minimum-mean square estimator
\begin{equation}
    \eta_\gamma(y)
    =
    \E[X\mid \sqrt{\gamma}X+N=y],
    \label{eq:app:eta}
\end{equation}
the reconstruction distribution
\begin{equation}
    Q_{\rho,\delta}
    =
    \operatorname{Law}
    \!\left(\eta_{\rho(1+\delta)}(Y_\rho)\right),
\end{equation}
and its reconstruction risk
\begin{equation}
    \mathcal R_\rho(\delta)
    =
    \E\!\left[
      \bigl(X-\eta_{\rho(1+\delta)}(Y_\rho)\bigr)^2
    \right].
\end{equation}

\subsection{Measurement-side sensitivity and Fisher information}
\label{app:theory:measurement}

Let $p_\rho$ denote the density of
$P_\rho=\operatorname{Law}(\sqrt{\rho}X+N)$. A local expansion of the KL
divergence in the smooth family $\{P_\rho\}$ gives
\begin{equation}
    \operatorname{KL}(P_\rho\Vert P_{\rho+\Delta})
    =
    \frac{1}{2}J(\rho)\Delta^2+o(\Delta^2),
    \label{eq:app:local-kl}
\end{equation}
where
\begin{equation}
    J(\rho)
    =
    \E_{Y_\rho}\!\left[
      \left(\partial_\rho\log p_\rho(Y_\rho)\right)^2
    \right]
\end{equation}
is the Fisher information for the channel parameter $\rho$~\citep[see][Chapter 11]{cover2006elements}. Since the relative
perturbation corresponds to $\Delta=\rho\delta$,
\begin{equation}
    F_\rho''(0)=\rho^2J(\rho).
    \label{eq:app:F-J}
\end{equation}

Low-SNR expansions for Gaussian channels are classical
\citep{prelov2004}. Here the required leading term follows directly by
expanding the Gaussian kernel:
\begin{equation}
    p_\rho(y)
    =
    \E[\varphi(y-\sqrt{\rho}X)]
    =
    \varphi(y)
    \left[
      1+\frac{\rho}{2}(y^2-1)+O(\rho^{3/2})
    \right],
    \label{eq:app:density-expansion}
\end{equation}
where $\varphi$ is the standard-normal density. Therefore
\begin{equation}
    \left.
    \partial_\rho\log p_\rho(y)
    \right|_{\rho=0}
    =
    \frac{1}{2}(y^2-1),
\end{equation}
and hence
\begin{equation}
    J(0)
    =
    \frac{1}{4}\E[(N^2-1)^2]
    =
    \frac{1}{2}.
\end{equation}
The same expansion gives $J(\rho)\to1/2$, so
\begin{equation}
    F_\rho''(0)
    =
    \frac{\rho^2}{2}+o(\rho^2).
    \label{eq:app:F-curvature}
\end{equation}

\subsection{Bounds on low-SNR MMSE estimators}

\begin{lemma}
\label{lem:app:posterior-expansion}
Let \(X\) satisfy
\[
    \mathbb E[X]=0,
    \qquad
    \mathbb E[X^2]=1,
    \qquad
    |X|\le M \quad \text{a.s.},
\]
and let \(N\sim\mathcal N(0,1)\) be independent of \(X\). For
\[
    Y_\beta=\beta X+N,
    \qquad
    m_\beta(y)=\mathbb E[X\mid Y_\beta=y],
\]
there exists a constant \(C=C(M)\) such that, for every
\(y\in\mathbb R\) and every \(|\beta|\le1\),
\begin{align}
    |m_\beta(y)-\beta y|
        &\le C\beta^2(1+y^2), \label{eq:app:m-expansion}\\
    |\partial_\beta m_\beta(y)-y|
        &\le C|\beta|(1+y^2), \label{eq:app:dm-expansion}\\
    |\partial_\beta^2 m_\beta(y)|
        &\le C(1+y^2). \label{eq:app:d2m-bound}
\end{align}
\end{lemma}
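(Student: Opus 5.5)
Our approach is to write the posterior mean as a ratio of exponential moments and differentiate under the expectation; boundedness of $X$ justifies everything. Set $w_\beta(x,y)=\exp(\beta xy-\beta^2x^2/2)$. Bayes' rule with the Gaussian likelihood gives
\[
    m_\beta(y)=\frac{\E[X\,w_\beta(X,y)]}{\E[w_\beta(X,y)]}.
\]
Equivalently, $m_\beta(y)=\partial_u\log Z(u,\beta)|_{u=y}$, where $Z(u,\beta)=\E[e^{\beta u X-\beta^2X^2/2}]$. The cleanest bookkeeping is to view $m_\beta(y)$ as the mean of $X$ under the tilted law $\pi_{\beta,y}(dx)\propto w_\beta(x,y)P_X(dx)$. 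Since $|X|\le M$, all moments of $\pi_{\beta,y}$ are bounded by powers of $M$, uniformly in $(\beta,y)$.

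\textbf{Derivatives.} We differentiate in $\beta$ using the standard exponential-family identity $\partial_\beta\E_\pi[f]=\mathrm{Cov}_\pi(f,\partial_\beta\log w)$ with $\partial_\beta\log w=xy-\beta x^2$. This gives
\[
    \partial_\beta m_\beta(y)=y\,\mathrm{Var}_\pi(X)-\beta\,\mathrm{Cov}_\pi(X,X^2).
\]
Differentiating once more yields third-order cumulant terms times $y^2$, terms in $y\beta$ and $\beta^2$, and a term $-\mathrm{Cov}_\pi(X,X^2)$. Interchanging derivative and expectation is legitimate because $X$ is bounded and $w$ is smooth.

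\textbf{The main obstacle.} These cumulants are bounded by constants $C(M)$, which would give $|\partial_\beta^2 m_\beta|\le C(1+y^2)$ directly. The difficulty is that this uniformity needs care, and the bound~(\ref{eq:app:dm-expansion}) requires $\mathrm{Var}_\pi(X)$ to be close to $1$ at rate $O(|\beta|(1+|y|))$. For large $|\beta y|$ the tilt is strong, so $\mathrm{Var}_\pi(X)$ is far from $1$. We resolve this by splitting into two regimes.

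\emph{Regime $|\beta y|\ge 1$.} Every left-hand side is crudely bounded. We have $|m_\beta|\le M$ and $|\partial_\beta m|\le M^2|y|+M^3$. Since $|\beta|\ge 1/|y|$, we get $|\beta y|\le \beta^2y^2$ and $|y|\le|\beta|y^2$. Hence
\[
    |m-\beta y|\le M+\beta^2y^2\le C\beta^2(1+y^2),
\]
using $M\le M\beta^2y^2$. Similarly $|\partial_\beta m-y|\le C|y|\le C|\beta|y^2$.

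\emph{Regime $|\beta y|<1$ (with $|\beta|\le1$).} The exponent $\beta xy-\beta^2x^2/2$ is bounded by $M+M^2/2$. The tilt is therefore uniformly controlled, and $\pi_{\beta,y}$ is a smooth perturbation of $P_X$. Taylor-expanding $w$ to first order, and using $\E X=0$ and $\E X^2=1$, gives
\[
    \mathrm{Var}_\pi(X)=1+O(|\beta|(|y|+1)).
\]
Then $\partial_\beta m-y=y(\mathrm{Var}_\pi-1)-\beta\,\mathrm{Cov}_\pi=O(|\beta|(1+y^2))$, which is~(\ref{eq:app:dm-expansion}).

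\textbf{Remaining bounds.} The bound~(\ref{eq:app:m-expansion}) follows by integrating~(\ref{eq:app:dm-expansion}) from $0$, since $m_0=0$:
\[
    |m_\beta(y)-\beta y|\le\int_0^{|\beta|}Cs(1+y^2)\,ds,
\]
which handles both regimes at once. The bound~(\ref{eq:app:d2m-bound}) comes from the cumulant formula together with uniform moment bounds, since all terms are at most $C(1+y^2)$. Uniform moment bounds hold in every regime because $\pi$ is supported on $[-M,M]$, so no regime split is needed there.
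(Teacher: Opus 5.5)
Your proof is correct, and most of it coincides with the paper's. That shared part is the tilted-law representation, the covariance identity giving $\partial_\beta m_\beta=y\operatorname{Var}_\pi(X)-\beta\operatorname{Cov}_\pi(X,X^2)$, the bound (\ref{eq:app:d2m-bound}) from bounded posterior moments, and the final integration from $\beta=0$ for (\ref{eq:app:m-expansion}).

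Where you differ is (\ref{eq:app:dm-expansion}), which you treat as the main obstacle. You prove it by splitting into $|\beta y|\ge1$, using crude bounds, and $|\beta y|<1$, using a first-order expansion of the uniformly bounded weight. In the first regime you implicitly use $|y|\ge 1/|\beta|\ge1$ to absorb the constant $M^3$ into $C|y|$; make that explicit. The paper avoids any case analysis. It proves (\ref{eq:app:d2m-bound}) first and integrates it once from $\beta=0$, where $\partial_\beta m_\beta(y)|_{\beta=0}=y\operatorname{Var}(X)=y$, so that $|\partial_\beta m_\beta(y)-y|\le\int_0^{|\beta|}C(1+y^2)\,ds$.

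The obstacle you identify is therefore not real. The bound $|\operatorname{Var}_\pi(X)-1|\le C|\beta|(1+|y|)$ holds globally, by the same integration applied to $|\partial_\beta\operatorname{Var}_\pi(X)|\le C(1+|y|)$. Where the tilt is strong it is automatic, since its right-hand side is then at least $C$ while $|\operatorname{Var}_\pi(X)-1|\le M^2$. Your route is self-contained and shows explicitly why the strong-tilt regime is harmless. But you already establish (\ref{eq:app:d2m-bound}) uniformly, so the paper's ordering (second derivative first, then two integrations from $\beta=0$) gets all three inequalities more economically.

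There is one harmless slip, in a remark you never use: $\partial_u\log Z(u,\beta)|_{u=y}=\beta\,m_\beta(y)$, not $m_\beta(y)$.
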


\begin{proof}
Let us introduce the standard Gaussian density
\[
    \varphi(t)=\frac{1}{\sqrt{2\pi}}e^{-t^2/2}.
\]
Conditioned on \(X=x\),
\(Y_\beta\) has the density $p_{Y_\beta\mid X=x}(y)=\varphi(y-\beta x)$ and 
   for every bounded measurable~\(f\), we introduce the quantity
\begin{equation}
   E_{\beta,y}(f) := \mathbb E[f(X)\mid Y_\beta=y]
=
\frac{
    \int f(x)\varphi(y-\beta x)\,P_X(dx)
}{
    \int \varphi(y-\beta x)\,P_X(dx)
}.
\label{eq:app:bayes-posterior}
\end{equation}
   Thus, if
\[
    Z_\beta(y)
    :=
    \int
    \exp\!\left(
        \beta xy-\frac{\beta^2x^2}{2}
    \right)P_X(dx),
\]
then
\begin{equation}
   E_{\beta,y}(f)
=
\frac{1}{Z_\beta(y)}
\int f(x)
\exp\!\left(
    \beta xy-\frac{\beta^2x^2}{2}
\right)P_X(dx).
\label{eq:app:posterior-expectation}
\end{equation}

   We next differentiate (\ref{eq:app:posterior-expectation}), noting that if
\[
    h_{\beta,y}(x):=xy-\beta x^2,
\]
then
\[
    \partial_\beta
    \exp\!\left(
        \beta xy-\frac{\beta^2x^2}{2}
    \right)
    =
    h_{\beta,y}(x)
    \exp\!\left(
        \beta xy-\frac{\beta^2x^2}{2}
    \right).
\]
Then, a short calculation gives
\begin{align}
   \partial_\beta E_{\beta,y}(f) 
    &=
    E_{\beta,y}( f h_{\beta,y})
    -
    E_{\beta,y}(f)
    E_{\beta,y}(h_{\beta,y})
    \notag\\
    &=
    \operatorname{Cov}
    \!\left(
        f(X),\,h_{\beta,y}(X)
        \mid Y_\beta=y
    \right).
\label{eq:app:posterior-beta-derivative}
\end{align}
The differentiation under the integral is justified by
\(|X|\le M\); for fixed \(y\) and \(|\beta|\le1\), both the
integrand and its \(\beta\)-derivatives are bounded by
integrable functions depending only on \(M\) and \(y\).

We now apply (\ref{eq:app:posterior-beta-derivative}) with
\(f(X)=X\). Define
\[
    V_\beta(y)
    :=
    \operatorname{Var}(X\mid Y_\beta=y),
    \qquad
    K_\beta(y)
    :=
    \operatorname{Cov}(X,X^2\mid Y_\beta=y).
\]
Since \(h_{\beta,y}(X)=yX-\beta X^2\), we obtain the exact
identity
\begin{equation}
    \partial_\beta m_\beta(y)
    =
    yV_\beta(y)-\beta K_\beta(y).
\label{eq:app:m-beta-derivative}
\end{equation}
Then,
differentiating once more gives
\[
    \partial_\beta^2m_\beta(y)
    =
    y\,\partial_\beta V_\beta(y)
    -K_\beta(y)
    -\beta\,\partial_\beta K_\beta(y).
\]
We will now proceed by bounding each term, starting with an auxiliary  result.
For $j=1,2,3$, let
\[
    a_{j,\beta}(y)
    :=
    \mathbb E[X^j\mid Y_\beta=y].
\]
Since $|X|\le M$, we have $|a_{j,\beta}(y)|\le M^j$.
Moreover, Eq.~(\ref{eq:app:posterior-beta-derivative}) gives
\[
    \partial_\beta a_{j,\beta}(y)
    =
    \operatorname{Cov}
    \left(
        X^j,\,yX-\beta X^2
        \mid Y_\beta=y
    \right).
\]
For $|\beta|\le1$,
\[
    |yX-\beta X^2|
    \le M|y|+M^2,
\]
and therefore
\begin{equation}
    |\partial_\beta a_{j,\beta}(y)|
    \le C(1+|y|),
    \qquad j=1,2,3,
    \label{eq:app:moment-derivative-bound}
\end{equation}
where $C$ depends only on $M$ (note that we use the relation $\operatorname{Cov}( Z1, Z2 ) \leq 2 \|Z_1\|_\infty \|Z_2\|_\infty$ and the fact that necessarily $M \geq 1$ given the normalization assumption on $X$).

We now remark that
\[
    V_\beta
    =a_{2,\beta}-a_{1,\beta}^2,
    \qquad
    K_\beta
    =a_{3,\beta}-a_{1,\beta}a_{2,\beta},
\]
and we can now obtain the following bounds (by adjusting the value of $C$)
\begin{equation}
    |K_\beta(y)|\le C,
    \qquad
    |\partial_\beta V_\beta(y)|
    +
    |\partial_\beta K_\beta(y)|
    \le C(1+|y|).
    \label{eq:app:VK-bounds}
\end{equation}

Using Eq.~(\ref{eq:app:VK-bounds}), and adjusting again the value of $C$, for $|\beta|\le1$,
\[
    |\partial_\beta^2m_\beta(y)|
    \le C(1+y^2),
\]
which proves (\ref{eq:app:d2m-bound}). Note that we use here the fact that $|y| \leq (1 + y^2)/2$.

It remains to integrate this bound. At $\beta=0$,
$Y_0=N$ carries no information about $X$, so
\[
    m_0(y)=\mathbb E[X]=0,
    \qquad
    V_0(y)=\operatorname{Var}(X)=1.
\]
Evaluating (\ref{eq:app:m-beta-derivative}) at $\beta=0$
therefore gives
\[
    \left.\partial_\beta m_\beta(y)\right|_{\beta=0}
    =y.
\]
By the fundamental theorem of calculus,
\[
    \partial_\beta m_\beta(y)-y
    =
    \int_0^\beta \partial_t^2m_t(y)\,dt,
\]
and hence
\[
    |\partial_\beta m_\beta(y)-y|
    \le C|\beta|(1+y^2),
\]
which proves (\ref{eq:app:dm-expansion}). Integrating once
more,
\[
    m_\beta(y)-\beta y
    =
    \int_0^\beta
    \bigl(\partial_t m_t(y)-y\bigr)\,dt,
\]
so that
\[
    |m_\beta(y)-\beta y|
    \le C\beta^2(1+y^2),
\]
after adjusting once more the constant $C$. This proves
(\ref{eq:app:m-expansion}).

\end{proof}

\subsection{Effect of operator error on reconstruction}
\label{app:theory:risk}

The posterior mean in Eq.~(\ref{eq:app:eta}) is the minimum-MSE estimator for
the Gaussian channel; we use standard properties of Gaussian-channel
conditional-mean estimation throughout this section
\citep{guo2005,guo2011}. In particular,
$\eta_\rho(Y_\rho)=\E[X\mid Y_\rho]$ is the $L^2$ projection of $X$ onto
functions of~$Y_\rho$. The resulting orthogonality identity gives
\begin{equation}
    \mathcal R_\rho(\delta)-\mathcal R_\rho(0)
    =
    \E\left[
        \bigl(
            \eta_{\rho(1+\delta)}(Y_\rho)
            -
            \eta_\rho(Y_\rho)
        \bigr)^2
    \right].
    \label{eq:app:mmse-pythagoras}
\end{equation}
Thus the excess MSE is exactly the squared change in the reconstruction caused
by using the wrong operator parameter.

To study the low-SNR regime, write $b=\sqrt{\rho}$ and, using the same notation as in the previous sub-section,
\[
    m_b(y)=\E[X\mid bX+N=y],
\]
we obtain from Lemma~\ref{lem:app:posterior-expansion} that
\begin{equation}
    |m_b(y)-by|
    \leq Cb^2(1+y^2),
    \label{eq:app:m-low-b}
\end{equation}
The amplitude after perturbation is then
$b_\delta=b\sqrt{1+\delta}$, such that
\begin{equation}
    \left.
    \partial_\delta m_{b_\delta}(y)
    \right|_{\delta=0}
    =
    \frac{b}{2}\partial_b m_b(y).
\end{equation}

Differentiating Eq.~(\ref{eq:app:mmse-pythagoras}) twice under the
expectation, which is justified by Lemma~\ref{lem:app:posterior-expansion}
and the finite fourth moment of $Y_\rho$, and using that the integrand
vanishes at $\delta=0$, gives
\begin{equation}
    \left.
    \frac{d^2}{d\delta^2}
    [\mathcal R_\rho(\delta)-\mathcal R_\rho(0)]
    \right|_{\delta=0}
    =
    \frac{\rho}{2}
    \E\left[
        \bigl(\partial_b m_b(Y_\rho)\bigr)^2
    \right].
\end{equation}
By Lemma~\ref{lem:app:posterior-expansion},
\[
    |\partial_b m_b(y)-y|
    \leq Cb(1+y^2).
\]
Evaluating this bound at $y=Y_\rho=N+bX$ and taking the $L^2$ norm gives
\[
    \|\partial_b m_b(Y_\rho)-Y_\rho\|_2
    \leq
    Cb
    \left(\E[(1+Y_\rho^2)^2]\right)^{1/2}
    =
    O(b),
\]
since $X$ is bounded and $N$ is Gaussian, so
$\E[Y_\rho^4]$ is uniformly bounded for $0\leq b\leq1$. Moreover,
\[
    \|Y_\rho-N\|_2
    =
    b\|X\|_2
    =
    b.
\]
Hence, by the triangle inequality,
\[
    \|\partial_b m_b(Y_\rho)-N\|_2
    =
    O(b).
\]
It follows that
\[
    \E\left[
        \bigl(\partial_b m_b(Y_\rho)\bigr)^2
    \right]
    \longrightarrow
    \E[N^2]
    =
    1
\]
as $\rho\to0$. Therefore,
\begin{equation}
    \left.
    \frac{d^2}{d\delta^2}
    [\mathcal R_\rho(\delta)-\mathcal R_\rho(0)]
    \right|_{\delta=0}
    =
    \frac{\rho}{2}+o(\rho).
    \label{eq:app:R-curvature}
\end{equation}

\subsection{Sensitivity of the observable clean-target loss}
\label{app:theory:clean}

We now study the reconstruction-side objective
\begin{equation}
    G_\rho(\delta)
    =
    W_2^2(Q_{\rho,\delta},P_X),
\end{equation}
where $P_X=\operatorname{Law}(X)$ is the clean law. The goal is
to understand how this loss changes when the candidate operator
is perturbed by $\delta$, particularly when $\rho$ is small.

\paragraph{Writing the loss in terms of quantiles.}
Our first observation is that the MMSE estimator is non-decreasing in the observation.
Indeed, from Prop. 1 of \citet{dytso2023conditional},
\begin{equation}
    \eta_\gamma'(y)
    =
    \sqrt{\gamma}\,
    \operatorname{Var}
    (X\mid \sqrt{\gamma}X+N=y)
    \geq 0.
    \label{eq:app:posterior-monotone}
\end{equation}

Let us now write $b=\sqrt{\rho}$, as in the previous subsection,
and recall that $m_\beta(y)=\eta_{\beta^2}(y)$.
Let $F_X$ and $F_{Y_\rho}$ denote the cdfs of the clean
signal and observation, respectively, and define their
generalized quantile functions by
\begin{equation}
    x(u)=F_X^{-1}(u),
    \qquad
    y_b(u)=F_{Y_\rho}^{-1}(u).
\end{equation}
Because $m_{b\sqrt{1+\delta}}$ is non-decreasing, as shown earlier, it preserves
the ordering of observations: applying the reconstruction map
to the $u$-quantile of $Y_\rho$ gives the $u$-quantile of the
reconstruction distribution. Hence,
\begin{equation}
    F_{Q_{\rho,\delta}}^{-1}(u)
    =
    m_{b\sqrt{1+\delta}}(y_b(u))
    \qquad\text{for almost every }u\in(0,1).
\end{equation}

This observation allows us to exploit a classical 
representation of the Wasserstein metric in one dimension~\citep[Sec.~2.6]
{peyre2019computational}:
\begin{equation}
    G_\rho(\delta)
    =
    \int_0^1
    \left[
        x(u)
        -
        m_{b\sqrt{1+\delta}}(y_b(u))
    \right]^2
    \dd u.
    \label{eq:app:G-quantile}
\end{equation}
This expression makes the comparison explicit: the loss
matches each clean-signal quantile to the corresponding
quantile of the reconstructed signal.

\paragraph{Low SNR analysis.}
As above, set $b=\sqrt\rho$ and write
$z_\delta(u):=m_{b\sqrt{1+\delta}}(y_b(u))$.
Equation~(\ref{eq:app:G-quantile}) can be equivalently rewritten as
$G_\rho(\delta)=\|x-z_\delta\|_2^2$, where
$\|\cdot\|_2$ is the norm on $L^2(0,1)$.
Note that the quantiles $y_b$ depend on the true $b$ but not on
$\delta$. Below, primes on $z_\delta$ denote derivatives with respect
to $\delta$.

By the chain rule, with derivatives of $m_\beta$ taken with respect
to $\beta$,
\[
\begin{aligned}
    z_0'(u)
    &=\frac b2\left.\partial_\beta m_\beta(y_b(u))\right|_{\beta=b},\\
    z_0''(u)
    &=\frac{b^2}{4}\left.\partial_\beta^2 m_\beta(y_b(u))\right|_{\beta=b}
      -\frac b4\left.\partial_\beta m_\beta(y_b(u))\right|_{\beta=b}.
\end{aligned}
\]
Lemma~\ref{lem:app:posterior-expansion} bounds the deviations from the
linear terms and the second derivative by multiples of $1+y^2$.
Since $y_b$ is the quantile function of $Y_\rho=N+bX$,
\[
    \int_0^1 (1+y_b(u)^2)^2\,\dd u
    =\E[(1+Y_\rho^2)^2]
    =O(1)
\]
as $b\to 0$, because $X$ is bounded and $N$ is Gaussian.
Thus the three bounds in Lemma~\ref{lem:app:posterior-expansion}
give directly
\begin{equation}
\begin{aligned}
    \|z_0-b y_b\|_2 &= O(b^2),\\
    \left\|z_0'-\frac b2 y_b\right\|_2 &= O(b^2),\\
    \left\|z_0''+\frac b4 y_b\right\|_2 &= O(b^2).
\end{aligned}
\label{eq:app:z-derivatives}
\end{equation}

Differentiating the exact quantile identity in
Eq.~(\ref{eq:app:G-quantile}) yields
\begin{align}
    G_\rho'(0)
    &=-2\int_0^1(x-z_0)z_0'\,\dd u,\\
    G_\rho''(0)
    &=2\int_0^1(z_0')^2\,\dd u
      -2\int_0^1(x-z_0)z_0''\,\dd u.
\end{align}
Differentiation under the integral is justified by the bounds in
Lemma~\ref{lem:app:posterior-expansion} and the finite fourth moment of
$Y_\rho$.
Since $\|x\|_2=1$ and
$\|y_b\|_2^2=\E[Y_\rho^2]=1+b^2$,
Eq.~(\ref{eq:app:z-derivatives}) and Cauchy--Schwarz give
\begin{equation}
\begin{aligned}
    G_\rho'(0)&=-b C_b+O(b^2),\\
    G_\rho''(0)&=\frac b2 C_b+O(b^2),
\end{aligned}
\qquad
C_b:=\int_0^1x(u)y_b(u)\,\dd u.
\label{eq:app:G-cross-moment}
\end{equation}

The remaining issue is that $C_b$ still depends on the
observation SNR. We now identify a constant depending only on
the clean law that approximates $C_b$ with error $O(b)$.
Let $\Phi$ denote the standard-normal cdf. Since
$Y_\rho=N+bX$, the particular coupling $(Y_\rho,N)$
and the one-dimensional Wasserstein quantile formula
\citep[Sec.~2.6]{peyre2019computational} imply
\begin{equation}
\begin{aligned}
    \|y_b-\Phi^{-1}\|_2
    &=W_2(P_\rho,\mathcal N(0,1))\\
    &\leq \bigl(\E|bX|^2\bigr)^{1/2}=b.
\end{aligned}
\label{eq:app:observation-quantile-bound}
\end{equation}
Define the fixed cross-moment
\begin{equation}
    \kappa_X
    :=\int_0^1x(u)\Phi^{-1}(u)\,\dd u
    =\int_0^1F_X^{-1}(u)\Phi^{-1}(u)\,\dd u.
    \label{eq:app:kappa}
\end{equation}
Because $\|x\|_2=1$, Cauchy--Schwarz now gives
\begin{equation}
    |C_b-\kappa_X|
    \leq\|x\|_2\|y_b-\Phi^{-1}\|_2\leq b.
    \label{eq:app:cross-moment-bound}
\end{equation}
Thus $C_b=\kappa_X+O(b)$; its contribution to
Eq.~(\ref{eq:app:G-cross-moment}) has error $O(b^2)$.
The two quantile functions in Eq.~(\ref{eq:app:kappa}) are
non-decreasing, centered and non-constant, so $\kappa_X>0$;
Cauchy--Schwarz gives $\kappa_X\leq1$, with equality if and
only if $X$ is standard Gaussian.

\begin{proposition}[Clean-target sensitivity]
\label{prop:app:clean}
Under the assumptions of Theorem~\ref{thm:theory:weak},
as $\rho\to 0$,
\begin{equation}
    G_\rho'(0)
    =-\kappa_X\sqrt{\rho}+O(\rho),
    \qquad
    G_\rho''(0)
    =\frac{\kappa_X}{2}\sqrt{\rho}+O(\rho).
    \label{eq:app:G-derivatives}
\end{equation}
\end{proposition}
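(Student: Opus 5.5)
The proposition essentially collects the estimates just derived, so the plan is to assemble them with $b=\sqrt{\rho}$ and track the error orders carefully.

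\emph{Step 1: exact derivative formulas.} Start from the quantile representation~(\ref{eq:app:G-quantile}), $G_\rho(\delta)=\|x-z_\delta\|_2^2$ in $L^2(0,1)$. Differentiate under the integral, which is justified by Lemma~\ref{lem:app:posterior-expansion} (bounds of order $1+y^2$) together with $\int_0^1(1+y_b^2)^2\,\dd u=\E[(1+Y_\rho^2)^2]=O(1)$. This gives $G_\rho'(0)=-2\langle x-z_0,z_0'\rangle$ and $G_\rho''(0)=2\|z_0'\|_2^2-2\langle x-z_0,z_0''\rangle$.

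\emph{Step 2: substitute the expansions.} Use~(\ref{eq:app:z-derivatives}) to write $z_0=by_b+r_0$, $z_0'=\tfrac b2 y_b+r_1$ and $z_0''=-\tfrac b4 y_b+r_2$, with each $\|r_i\|_2=O(b^2)$. Also use $\|x\|_2=1$ and $\|y_b\|_2=\sqrt{1+b^2}=O(1)$, so $\|x-z_0\|_2=O(1)$.

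\begin{itemize}
\item For the first derivative, Cauchy--Schwarz gives $\langle x-z_0,z_0'\rangle=\tfrac b2\langle x,y_b\rangle-\tfrac b2\langle z_0,y_b\rangle+O(b^2)$. Since $\langle z_0,y_b\rangle=O(b)$, the second term is $O(b^2)$. Hence $G_\rho'(0)=-bC_b+O(b^2)$.
\item For the second derivative, $2\|z_0'\|_2^2=O(b^2)$. Moreover $-2\langle x-z_0,z_0''\rangle=\tfrac b2\langle x,y_b\rangle+O(b^2)$, by the same argument. Hence $G_\rho''(0)=\tfrac b2C_b+O(b^2)$.
\end{itemize}

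This reproduces~(\ref{eq:app:G-cross-moment}).

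\emph{Step 3: remove the $b$-dependence of $C_b$.} Apply~(\ref{eq:app:cross-moment-bound}), $|C_b-\kappa_X|\le b$. It follows from the coupling $(Y_\rho,N)$, which gives $\|y_b-\Phi^{-1}\|_2=W_2(P_\rho,\mathcal N(0,1))\le b$, combined with Cauchy--Schwarz. Substituting gives $bC_b=b\kappa_X+O(b^2)$. With $b=\sqrt\rho$ and $b^2=\rho$, this yields~(\ref{eq:app:G-derivatives}).

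\emph{Step 4: $\kappa_X\in(0,1)$.} Cauchy--Schwarz with $\|x\|_2=\|\Phi^{-1}\|_2=1$ gives $\kappa_X\le1$, with equality iff $x=\Phi^{-1}$, i.e.\ $X$ standard Gaussian; this is excluded because $X$ is bounded, so $\kappa_X<1$. For positivity, both quantile functions are non-decreasing with mean zero. By the Chebyshev/Hoeffding covariance inequality for comonotone functions, $\int x\Phi^{-1}\,\dd u\ge0$, with strict inequality since neither function is a.e.\ constant.

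\emph{Main obstacle.} The delicate point is the uniformity needed to differentiate under the integral and to get $O(b^2)$ remainders. The fix is to apply the lemma's constants with $|\beta|\le1$, at $\beta=b\sqrt{1+\delta}$ for small $\delta$, and to control the $(1+y^2)$ weights through the fourth moment of $Y_\rho$. One must also check that the quantile identity for $Q_{\rho,\delta}$ holds a.e., which rests on the monotonicity~(\ref{eq:app:posterior-monotone}).
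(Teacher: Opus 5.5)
Your proposal is correct and follows the paper's argument essentially step for step. The shared steps are the quantile representation of $G_\rho$, the Lemma-based $L^2(0,1)$ expansions of $z_0,z_0',z_0''$, Cauchy--Schwarz to reach $G_\rho'(0)=-bC_b+O(b^2)$ and $G_\rho''(0)=\tfrac b2 C_b+O(b^2)$, and the coupling bound $|C_b-\kappa_X|\le b$. You also make explicit two points the paper leaves implicit: that $\langle z_0,y_b\rangle=O(b)$, so the $-z_0$ part of $x-z_0$ contributes only at order $b^2$; and that boundedness of $X$ rules out the Gaussian equality case, giving $\kappa_X<1$.
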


\begin{proof}
Substitute Eq.~(\ref{eq:app:cross-moment-bound})
into Eq.~(\ref{eq:app:G-cross-moment}) and use
$b=\sqrt\rho$.
\end{proof}

\section{Experimental settings and additional results}
\label{sec:appendix:exp}

This appendix details the experimental settings
(Appendix~\ref{sec:appendix:exp:common}) and the operators
(Appendix~\ref{sec:appendix:kernels}), and gives additional results for each
experiment, for the unbiased variant of the reconstruction loss
(Appendix~\ref{subsec:unbiased_exp}), as well as the runtime
(Appendix~\ref{sec:appendix:exp:analysis}).

\subsection{Experimental settings}
\label{sec:appendix:exp:common}

For each task, the measurement-only configuration is fixed first and reused
for our method, with the reconstruction branch added at $\lambda=1$. Teachers
are conditional flow-matching velocity fields with $32$ base channels,
multipliers $(1,2,3,4)$, one block per resolution, attention at $32\times32$;
they are trained once per task on $8\cdot10^{6}$ images and then frozen.
Auxiliary fields share this architecture and are initialized from
the corresponding teachers.

Operator learning uses Adam with learning rate $3\cdot10^{-3}$ for the operator
and $10^{-4}$ for the auxiliary fields, batch size $32$, $64\times64$ patches,
one auxiliary update per operator update, interpolation times drawn uniformly
in $(0,1)$, $K=8$ DPIR iterations with the final $K_{\rm bp}=1$
differentiated, $\epsilon=10^{-12}$ in Eq.~(\ref{eq:method:balance}), and an
operator EMA with $\beta=0.999$. Table~\ref{tab:appendix:settings} lists the
task-specific choices, and Appendix~\ref{sec:method:details} gives further
implementation details.

\begin{table}[h]
\caption{\textbf{Per-task settings.} $K_{\rm warm}$ is the linear warm-up of
   $\lambda$ in Eq.~(\ref{eq:method:curriculum}), expressed as a fraction of the
training budget. Centering is used unless otherwise stated; see
Appendix~\ref{sec:appendix:centering}.}
\label{tab:appendix:settings}
\centering
\small
\begin{tabular}{lllll}
\toprule
& Uniform blur & Space-varying & Blind SR & Real capture \\
\midrule
Clean set $\mathcal X$    & FFHQ-256, 1k & DPDD, sharp & LR image & f/5.6 crops \\
Degraded set $\mathcal Y$ & FFHQ-256, 1k & DIV2K + PSF & LR crops, $32^2$ & f/16 frames \\
Evaluation                & FFHQ-256, 1k & DPDD test, 76 & DIV2KRK, 100 & --- \\
\midrule
Operator                  & $33\times33$ & $8\times8$ grid, $27^2$ & $11\times11$, $\times2$ & $9\times12$ grid, $13^2$ \\
Channels                  & 1 (3 for color) & 3 & 1 & 3 \\
Boundary                  & circ.\ / repl. & replicate & norm.\ conv. & replicate \\
Centering                 & off / on & on, green & on & on, green \\
Location conditioning     & yes & yes & no & yes \\
\midrule
Forward $\sigma$          & $2\cdot10^{-2}$ & $10^{-2}$ & $0$ & $2\cdot10^{-2}$ \\
Restorer $\sigma$         & $2\cdot10^{-2}$ & $10^{-2}$ & $10^{-2}$ & $2\cdot10^{-2}$ \\
$\lambda$ / $K_{\rm warm}$ & $1$ / $1/4$ & $1$ / $1/4$ & $1$ / $1/2$ & $1$ / $1/4$ \\
Budget (images)           & $4\cdot10^{6}$ & $4\cdot10^{6}$ & $5\cdot10^{5}$ & $4\cdot10^{6}$ \\
Operator lr schedule      & cosine & cosine & cosine & cosine \\
\bottomrule
\end{tabular}
\end{table}

\paragraph{Task-specific choices.}
The three motion-blur experiments use circular padding and disable
recentering; all other experiments follow the centering convention of
Appendix~\ref{sec:appendix:centering}. Blind super-resolution keeps centering,
because the operator position selects a decimation phase, and applies the
forward operator as a normalized convolution: the observation is zero-padded
and then divided by the response of the same kernel to an all-ones mask, so
that border pixels are renormalized by the support they actually received.
Forward and restorer noise levels coincide for the deblurring experiments;
blind super-resolution is noiseless, and DPIR uses $10^{-2}$ as a
regularization level. Since super-resolution estimates one operator per image,
it uses a shorter budget and a longer warm-up. In blind super-resolution, a
Gaussian prior is applied to every method, ours included, through the
post-processing of Section~\ref{sec:exp:sr}; no other kernel regularizer is
used in any task.

\subsection{Operators used in the experiments}
\label{sec:appendix:kernels}

This appendix visualizes the ground-truth operators used in
Sections~\ref{sec:exp:uniform} and~\ref{sec:exp:nu}. Kernels are displayed on a
square-root intensity scale and normalized to their own maximum to make
low-amplitude structure visible.

\paragraph{Spatially uniform kernels}
\label{sec:appendix:kernels:uniform}

Figure~\ref{fig:appendix:kernels-uniform} shows the nine uniform degradations of
Table~\ref{tab:exp:uniform}, represented on the $33\times33$ support used for
operator estimation.

\begin{figure}[t]
\centering
\includegraphics[width=\textwidth]{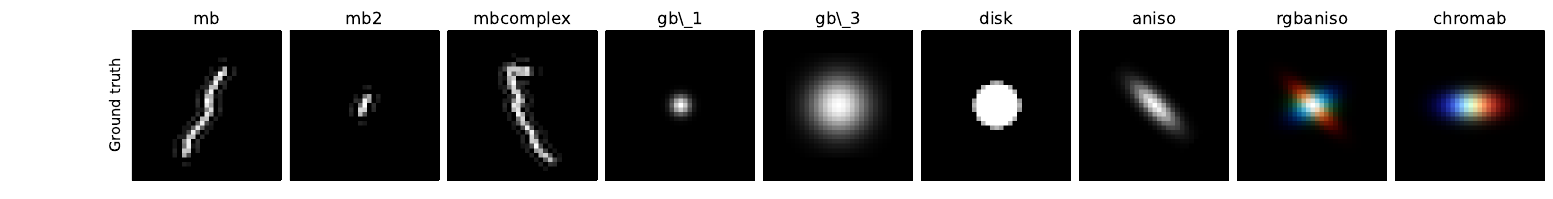}
\caption{\textbf{Uniform degradations used in our experiments.}
The set contains three motion kernels, four analytic PSFs, and two
channel-dependent operators. Each kernel is shown on a square-root intensity
scale and normalized to its own maximum. The final two are displayed as RGB
composites to reveal differences between color channels.}
\label{fig:appendix:kernels-uniform}
\end{figure}

The first three operators are irregular motion-blur kernels generated from
random trajectories of increasing complexity and spatial extent.
The analytic PSFs contain isotropic Gaussians
with $\sigma\in\{1,3\}$, a defocus disk of radius $5$, and an anisotropic
Gaussian with $\sigma=(3,1)$ rotated by $45^\circ$. Together they cover
smooth isotropic blur, directional blur, defocus, and irregular motion.
The motion kernels use circular padding and do not apply the centering
correction; all other kernel experiments follow the convention of
Appendix~\ref{sec:appendix:centering}.

The two color operators test channel-dependent degradation. The first uses a
different anisotropic Gaussian in each channel while keeping their centers
aligned. The second models lateral chromatic aberration by shifting the red and
blue kernels by $\pm2$ pixels relative to green. We therefore use the green
channel as the centering anchor so that the common translation is removed
without eliminating the relative chromatic displacement; see
Appendix~\ref{sec:appendix:centering}.

\paragraph{Measured space-varying PSF}
\label{sec:appendix:kernels:nu}

The operator of Section~\ref{sec:exp:nu} is derived from the measured PSF field
of a Canon EF 24\,mm f/1.4L USM lens at full aperture released by
\citet{bauer2018automatic}. We sample the measured field on an $8\times8$
regular grid, crop each PSF to $55\times55$, resample it to $27\times27$, and
normalize each channel to unit sum. The resulting grid is centered using the
same green-anchored convention as the estimated operator.

\begin{figure}[t]
\centering
\includegraphics[width=0.75\textwidth]{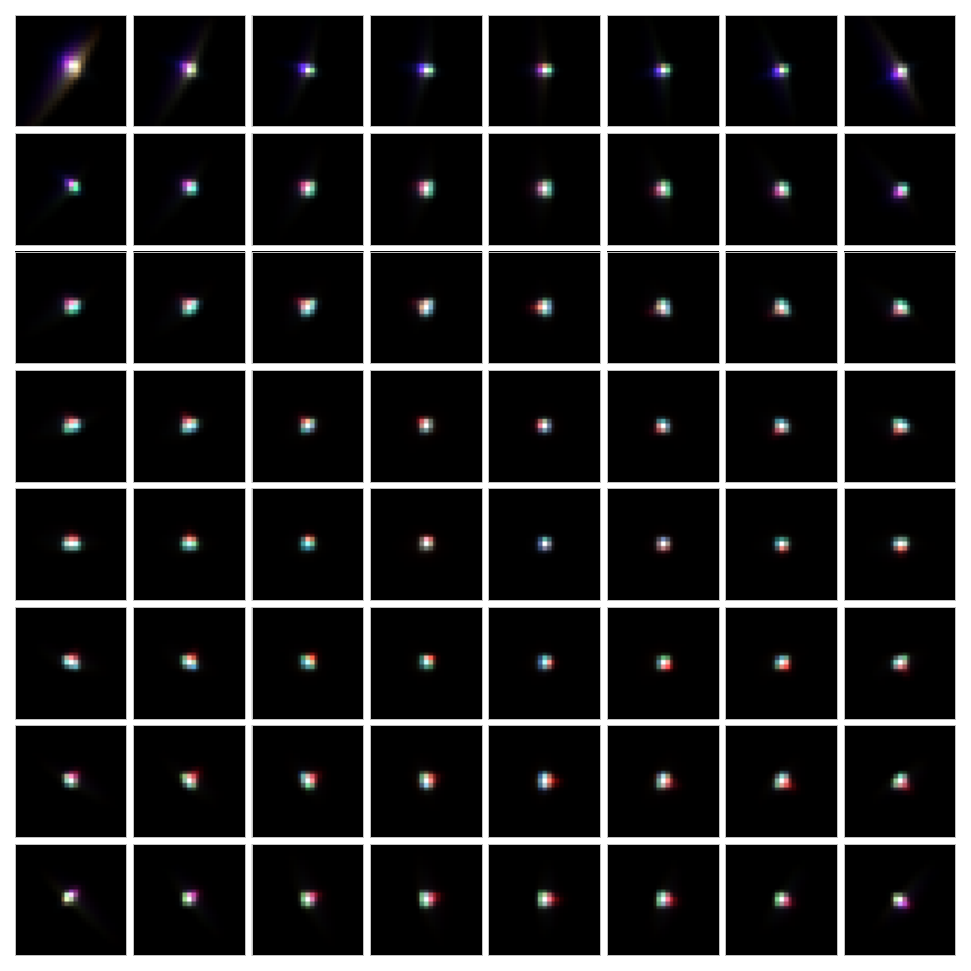}
\caption{\textbf{Measured $8\times8$ PSF field} from a Canon EF 24\,mm
f/1.4L USM lens at f/1.4~\citep{bauer2018automatic}. Each tile shows the
$27\times27$ RGB PSF at one field position and is normalized to its own maximum
for visualization.}
\label{fig:appendix:psf-field}
\end{figure}

\subsection{Spatially uniform blur}
\label{sec:appendix:exp:uniform}

The clean, degraded, and evaluation sets contain $1000$ FFHQ-256 images each,
drawn from disjoint partitions. Table~\ref{tab:exp:uniform} gives the
per-degradation results summarized in Table~\ref{tab:exp:uniform-summary}, and
Table~\ref{tab:appendix:uniform-full} the corresponding SSIM and LPIPS.

\begin{table}[h]
\caption{\textbf{Spatially uniform blur on FFHQ-256.} Kernel error:
shift-aligned kernel MAE ($\times10^{-4}$); restoration is RGB PSNR (dB) after
DPIR with the estimated operator. \emph{Input} is the degraded image and
\emph{True op.} uses the ground-truth operator.}
\label{tab:exp:uniform}
\centering
\small
\begin{tabular}{lrrcrrrr}
\toprule
& \multicolumn{2}{c}{Kernel error $\downarrow$}
& & \multicolumn{4}{c}{Restoration PSNR $\uparrow$} \\
\cmidrule(lr){2-3}\cmidrule(lr){5-8}
Degradation & Meas.-only & Ours & & Input & Meas.-only & Ours & True op. \\
\midrule
Motion \texttt{mb}          & 3.49 & \best{1.74} & & 22.12 & 31.23 & \best{31.63} & 31.71 \\
Motion \texttt{mb2}         & \best{2.02} & 2.94 & & 28.76 & 34.12 & \best{34.56} & 34.57 \\
Motion \texttt{mbcomplex}   & 3.67 & \best{1.71} & & 21.68 & 31.64 & \best{32.18} & 32.19 \\
Gaussian $\sigma{=}1$       & 1.74 & \best{0.87} & & 30.01 & 33.19 & \best{34.75} & 34.85 \\
Gaussian $\sigma{=}3$       & \best{1.20} & 1.25 & & 24.98 & \best{29.23} & 29.21 & 29.52 \\
Defocus disk                & 2.50 & \best{2.16} & & 25.12 & 30.39 & \best{30.76} & 31.17 \\
Anisotropic Gaussian        & 1.77 & \best{1.47} & & 26.41 & 30.88 & \best{31.05} & 31.36 \\
Per-channel anisotropic     & 1.71 & \best{0.95} & & 26.91 & 31.87 & \best{32.26} & 32.35 \\
Chromatic aberration        & 1.37 & \best{0.99} & & 25.16 & 31.54 & \best{31.77} & 31.88 \\
\midrule
Mean                        & 2.16 & \best{1.56} & & 25.68 & 31.57 & \best{32.02} & 32.18 \\
\bottomrule
\end{tabular}
\end{table}

\begin{table}[h]
\caption{\textbf{Spatially uniform blur: SSIM and LPIPS} at the last checkpoint.}
\label{tab:appendix:uniform-full}
\centering
\small
\begin{tabular}{lrrrcrrr}
\toprule
& \multicolumn{3}{c}{SSIM $\uparrow$} & & \multicolumn{3}{c}{LPIPS $\downarrow$} \\
\cmidrule(lr){2-4}\cmidrule(lr){6-8}
Degradation & Meas.-only & Ours & True op. & & Meas.-only & Ours & True op. \\
\midrule
Motion \texttt{mb}        & 0.8744 & \best{0.8791} & 0.8779 & & 0.0867 & \best{0.0771} & 0.0789 \\
Motion \texttt{mb2}       & \best{0.9274} & 0.9272 & 0.9267 & & 0.0397 & \best{0.0352} & 0.0405 \\
Motion \texttt{mbcomplex} & 0.8845 & \best{0.8915} & 0.8915 & & 0.0819 & \best{0.0699} & 0.0720 \\
Gaussian $\sigma{=}1$     & 0.9249 & \best{0.9292} & 0.9281 & & 0.0492 & \best{0.0467} & 0.0464 \\
Gaussian $\sigma{=}3$     & \best{0.8257} & 0.8192 & 0.8272 & & \best{0.2061} & 0.2197 & 0.2053 \\
Defocus disk              & \best{0.8551} & 0.8537 & 0.8671 & & 0.1435 & \best{0.1287} & 0.1281 \\
Anisotropic Gaussian      & \best{0.8737} & 0.8686 & 0.8732 & & 0.1339 & \best{0.1325} & 0.1332 \\
Per-channel anisotropic   & \best{0.8895} & 0.8885 & 0.8903 & & \best{0.1145} & 0.1172 & 0.1113 \\
Chromatic aberration      & \best{0.8815} & 0.8797 & 0.8812 & & \best{0.1301} & 0.1310 & 0.1284 \\
\bottomrule
\end{tabular}
\end{table}

\subsection{Measured PSF field with chromatic aberration}
\label{sec:appendix:exp:nu}

The degraded set consists of DIV2K images blurred with the measured field of
Appendix~\ref{sec:appendix:kernels:nu} and corrupted by Gaussian noise with
$\sigma=10^{-2}$; the clean set consists of the sharp DPDD training images.
Restoration is evaluated on the $76$ DPDD test images at full resolution,
degraded with the true field. Table~\ref{tab:appendix:nu-metrics} adds SSIM to
the PSNR and LPIPS of Table~\ref{tab:exp:nu}: all metrics rank the methods
identically.

\begin{table}[h]
\caption{\textbf{Measured PSF field: secondary restoration metrics} on the
$76$ DPDD test images.}
\label{tab:appendix:nu-metrics}
\centering
\small
\begin{tabular}{lrr}
\toprule
Method & SSIM $\uparrow$ & LPIPS $\downarrow$ \\
\midrule
Restormer~\citep{zamir2022restormer} & 0.854 & 0.205 \\
OmniLens~\citep{jiang2024flexible}   & 0.862 & 0.203 \\
\midrule
Measurement-only & 0.838 & 0.207 \\
Ours             & \best{0.925} & \best{0.132} \\
True operator    & 0.939 & 0.111 \\
\bottomrule
\end{tabular}
\end{table}

\begin{figure}[h]
\centering
\includegraphics[width=\textwidth]{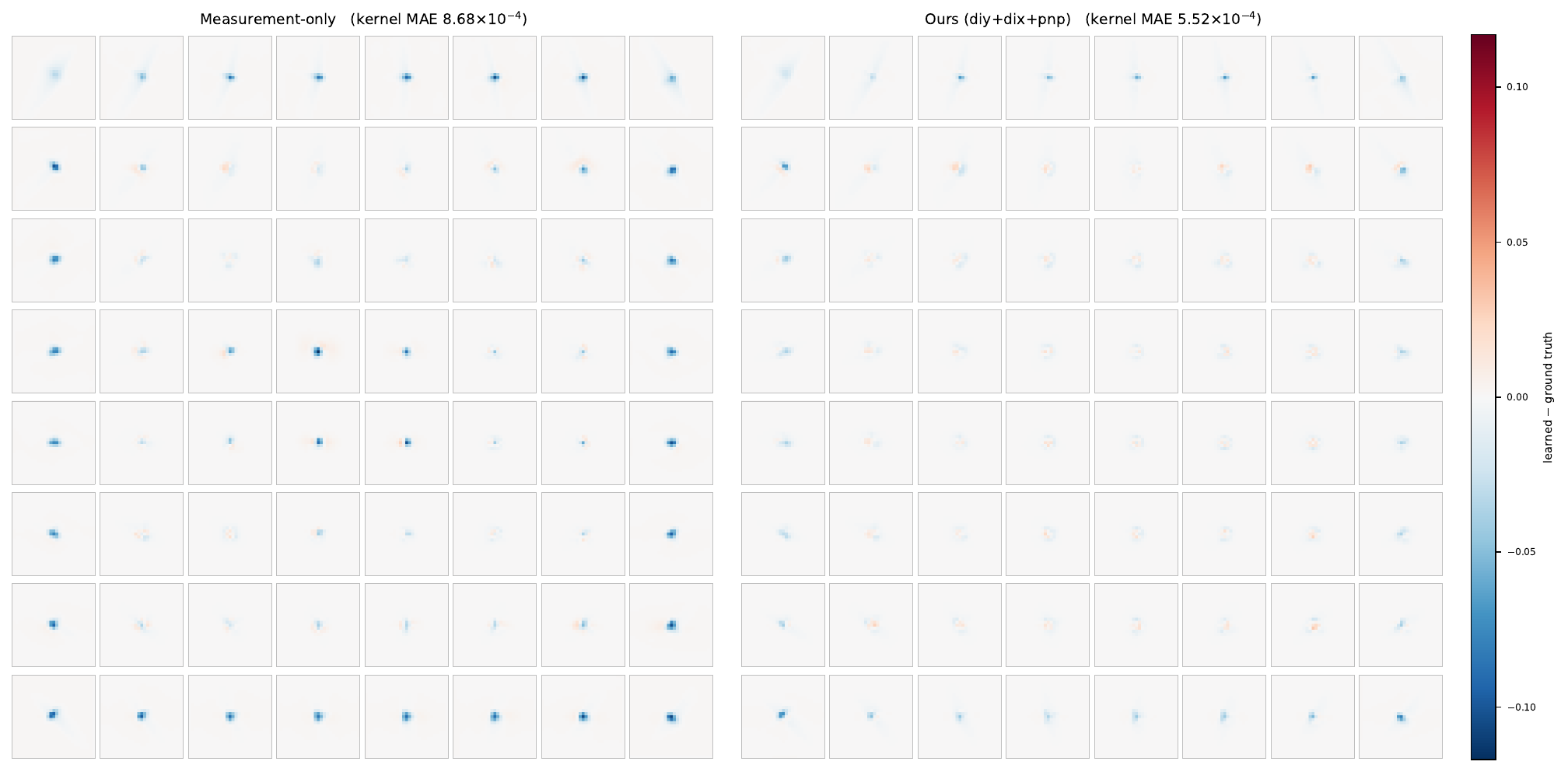}
\caption{\textbf{Error of the recovered PSF field} (estimate $-$ true PSF, each
normalized to unit sum) at the $8\times8$ field positions, for measurement-only
matching (left) and ours (right), on a shared color scale. }
\label{fig:psf-field-pair}
\end{figure}

\subsection{Blind super-resolution}
\label{sec:appendix:exp:sr}

\paragraph{Protocol.}
For each image, the degraded set consists of $32\times32$ crops of the
low-resolution image and the clean set of $64\times64$ crops of the same image,
so that the objective matches the law of $A\cdot x_{64}$ to that of $x_{32}$,
following the cross-scale recurrence assumption of~\citet{bell2019kernelgan}.
We estimate one $11\times11$ kernel per image, the native size of the DIV2KRK
ground truth, apply it with the normalized-convolution boundary condition of
Appendix~\ref{sec:appendix:exp:common}, and restore with DPIR, using the
estimated kernel as an explicit $\times2$ downsampling operator. We report PSNR
on the luminance channel (PSNR-Y) and in RGB, SSIM, LPIPS, and the
center-of-mass-aligned kernel MAE on a common $33\times33$ canvas, which is
needed to compare methods that predict kernels of different supports.
KernelGAN's kernels, stored on a $17\times17$ grid, are first cropped to
$11\times11$, so that the zeroed border covers the same fraction of the support
for every method. Paired comparisons use two-sided sign-flip permutation tests
on the per-image differences ($20{,}000$ permutations).

\paragraph{Post-processing.}
The common Gaussian prior of Section~\ref{sec:exp:sr} is applied identically to
every estimated kernel before restoration: the kernel is centered, its outer
one-pixel border is set to zero, it is renormalized, and it is replaced by the
anisotropic Gaussian matching its first two moments. The border is cropped
before the projection, since outer mass would otherwise bias the moments. The
projection needs no tuning: blending the kernel with its projection
($\alpha=0.75$) and replacing it entirely ($\alpha=1$) are statistically
indistinguishable ($+0.02$~dB, $p=0.25$), and we use $\alpha=1$.
For the baselines, centering is a half-pixel correction of the kernel
position. The position sets the decimation phase, so this correction matters
for restoration while leaving the kernel shape untouched
(Appendix~\ref{sec:appendix:centering}); applying it only to our own estimate
would be an unfair advantage.

\paragraph{Full results}
\label{sec:appendix:exp:sr:full}

Table~\ref{tab:exp:sr:full} presents the results of every method with its raw kernel, and after post-processing. The centering correction alone is worth $+1.31$~dB to
KernelGAN and $+0.43$~dB to DIP-DKP ($34.45$~dB when centered only), while
moving their aligned kernel error by at most $0.05$.
The Gaussian part of the post-processing is then strongly method-dependent and
inversely related to how much each method already constrains its estimate:
$+0.61$~dB for ours, $+0.31$ for KernelGAN, $+0.14$ for DCLS, and $+0.09$ for
DIP-DKP ($+1.73$ for measurement-only matching). The post-processing never harms a method on
average, which is why the main text reports post-processed numbers; without
it, ours still has the best PSNR. For KernelGAN, centering and projection
together add $1.62$~dB in PSNR-Y ($32.60\rightarrow34.22$); with restoration by
DPIR instead of ZSSR, they explain why our KernelGAN results are higher than
those reported by~\citet{bell2019kernelgan}.

\begin{table}[h]
\caption{\textbf{Blind SR on DIV2KRK ($\times2$), full results.} Each method
is shown with its raw kernel and after the post-processing of
Section~\ref{sec:exp:sr} (border zeroed, recentred, renormalized, then replaced
by the anisotropic Gaussian matching its first two moments; for the baselines,
this also includes the half-pixel centering correction). Kernel error is
COM-aligned MAE ($\times10^{-4}$) on a common $33\times33$ canvas.
\textbf{Bold} marks the best entry in a column together with every entry not
significantly different from it (two-sided paired permutation test,
$p\geq0.05$).}
\label{tab:exp:sr:full}
\centering
\small
\begin{tabular}{llrrrrr}
\toprule
Method & & Kernel err. $\downarrow$ & PSNR-Y $\uparrow$ & PSNR $\uparrow$ & SSIM $\uparrow$ & LPIPS $\downarrow$ \\
\midrule
KernelGAN  & raw                & 3.50 & 32.60 & 30.89 & 0.8172 & 0.2769 \\
           & centered only      & 3.50 & 33.91 & 32.17 & 0.8438 & 0.2715 \\
           & post-processed     & 3.42 & 34.22 & 32.46 & 0.8481 & 0.2714 \\
\midrule
DCLS       & raw                & 8.74 & 33.45 & 31.71 & 0.8111 & 0.3219 \\
           & post-processed     & 6.97 & 33.59 & 31.84 & 0.8160 & 0.3129 \\
\midrule
DIP-DKP    & raw                & 3.02 & 34.02 & 32.28 & 0.8497 & \best{0.2611} \\
           & post-processed     & 2.57 & 34.54 & 32.77 & \best{0.8575} & \best{0.2599} \\
\midrule
Measurement-only & raw          & 5.43 & 32.22 & 30.52 & 0.7958 & 0.2931 \\
           & post-processed     & 2.96 & 33.95 & 32.20 & 0.8459 & 0.2677 \\
\midrule
Ours       & raw                & 3.76 & 34.38 & 32.62 & 0.8486 & 0.2705 \\
           & post-processed     & \best{2.15} & \best{34.99} & \best{33.21} & \best{0.8561} & \best{0.2601} \\
\midrule
True operator &                 & ---  & 35.62 & 33.81 & 0.8724 & 0.2435 \\
\bottomrule
\end{tabular}
\end{table}

\subsection{Proof of concept for real lens calibration}
\label{sec:appendix:exp:real}

\paragraph{Acquisition.}
A single textured outdoor scene is photographed with a consumer camera
(Panasonic Lumix GX9 with 
Leica DG Summilux 25mm f/1.4 II
ASPH lens) at two apertures. The seven f/16 frames, which exhibit
stronger diffraction blur, are used at full sensor resolution
($5200\times3904$) and form $\mathcal Y$; the eight f/5.6 frames are
center-cropped to $768\times768$ and form $\mathcal X$
(Figure~\ref{fig:appendix:real-data}). The camera is reframed between
apertures, so that no image in one set corresponds to any image in the other,
and the content of the two sets differs: the f/16 frames cover the whole field,
while the f/5.6 crops are taken from its center. The camera is also rotated
between shots, including upside-down, so that no orientation is systematically
associated with either aperture.

\paragraph{Estimation and baselines.}
We estimate a $9\times12$ grid of $13\times13$ RGB kernels, matching the sensor
aspect ratio. The kernels are treated as free parameters, without any lens or
chromatic-aberration model, and are learned with the recipe of the other
space-varying experiment and the values of Table~\ref{tab:appendix:settings};
the noise level is treated as a fixed hyperparameter. Both baselines restore
each f/16 frame on its own: the two-step method of~\citet{eboli2022fast}, which
estimates local Gaussian PSFs and corrects lateral chromatic aberration, is run
with its official code; OmniLens~\citep{jiang2024flexible}, a network
trained on the PSF fields of a large library of automatically designed lenses,
is run with the released pretrained weights.

\paragraph{Evaluation.}
No paired sharp image or calibrated PSF is available for this capture, so we
report neither PSNR nor kernel error. We also avoid reprojection consistency as
a primary metric: in our tests, it can be small for operators that still
visibly under-deblur the image. Figure~\ref{fig:appendix:real} shows the
recovered PSF fields.
Figure~\ref{fig:exp:real2} presents additional qualitative results on several crops.

\begin{figure}[h]
\centering
\includegraphics[width=\textwidth]{figures/real2_seq2_dataset.pdf}
\caption{\textbf{The fifteen images of the real capture.} Top: the seven f/16
frames forming $\mathcal Y$, at full sensor resolution ($5200\times3904$).
Bottom: the eight f/5.6 frames forming $\mathcal X$, center-cropped to
$768^2$. The camera is reframed and rotated between shots, so that the two sets
share no image correspondence and no common orientation; images are shown as
captured.}
\label{fig:appendix:real-data}
\end{figure}

\begin{figure}[h]
\centering
\includegraphics[width=\textwidth]{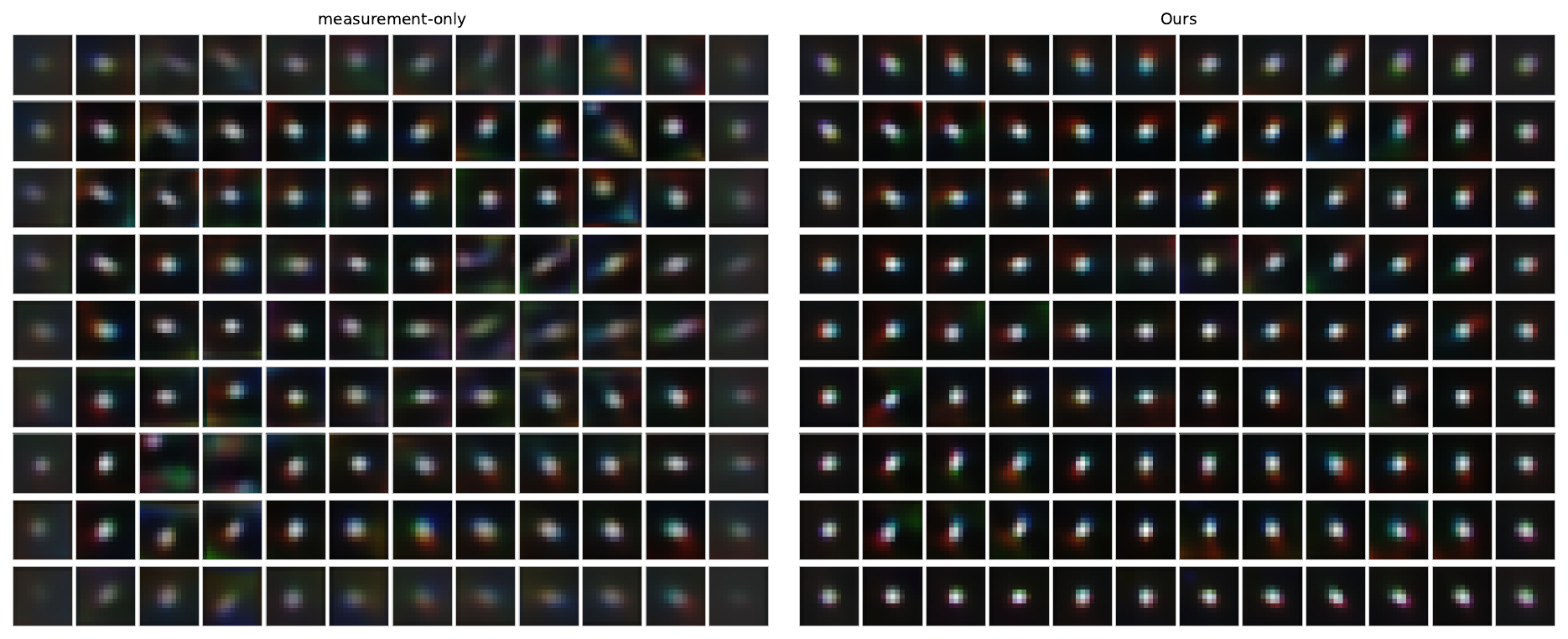}
\caption{\textbf{Recovered PSF fields on the real capture.} Learned $9\times12$
grid of $13\times13$ RGB kernels after $4\cdot10^{6}$ training images, for
measurement-only matching (left) and ours (right), laid out by field position.
Each PSF is normalized to unit sum and displayed as an RGB composite, so a
colored fringe indicates a channel-dependent shift.}
\label{fig:appendix:real}
\end{figure}

\begin{figure}[h]
\centering
   \includegraphics[width=\textwidth]{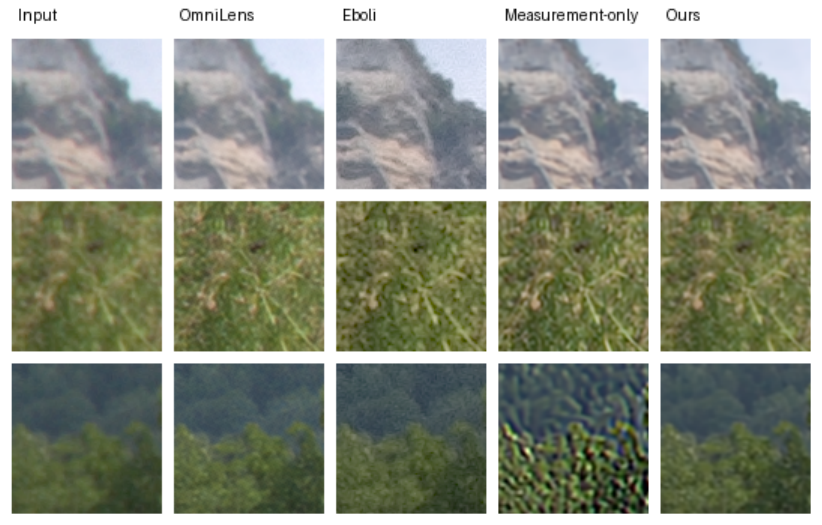}
   \caption{\textbf{Real lens calibration. Additional crops.}  First row: note the green fringe between vegetation and sky. OmniLens does not remove it whereas other approaches do. On the last row, measurement-only catastrophically fails whereas other approaches perform well. Residual noise artefact are still present in the method of~\citet{eboli2022fast} despite manually trying many parameters.}
\label{fig:exp:real2}
\end{figure}

\subsection{Unbiased reconstruction loss}
\label{subsec:unbiased_exp}

\paragraph{Loss.}
As shown in Section~\ref{sec:theory}, the clean-side loss $G$ is biased: with
a point-estimate restorer, restorations of real measurements are not
distributed like clean images, even with the true operator. The bias can be
removed by comparing restorations of real measurements with restorations of
synthetic measurements generated by the candidate operator:
\begin{equation}
    \widetilde G(\theta) := \mathcal D\big(Q_\theta \,\Vert\, Q_\theta^{\rm syn}\big),
    \qquad
    Q_\theta^{\rm syn} := \operatorname{Law}\big(R_\theta(A_\theta X+\varepsilon)\big),
    \quad X\sim P_X,
    \label{eq:app:two-sample}
\end{equation}
where $Q_\theta = \operatorname{Law}(R_\theta(Y))$ and $\mathcal D$ is the
weighted IKL of Appendix~\ref{sec:appendix:ikl}. Under the model of
Eq.~(\ref{eq:method:model}), $A_{\theta^\star}X+\varepsilon$ has the law of
$Y$, so $Q_{\theta^\star}=Q^{\rm syn}_{\theta^\star}$ and
$\widetilde G(\theta^\star)=0$: since $\mathcal D\ge 0$, the true operator
minimizes $\widetilde G$, whatever the restorer. The two losses differ only in
their target: $G$ compares restorations of real measurements with clean images,
whereas $\widetilde G$ compares them with clean images as rendered by the
candidate degradation and the restorer. They coincide for a restorer that
preserves the clean distribution, such as posterior sampling, for which
$Q^{\rm syn}_\theta=P_X$ for every $\theta$.

\paragraph{Implementation.}
We estimate the gradient of $\widetilde G$ with the estimator of
Eq.~(\ref{eq:method:ikl-gradient}), taking $Z_1(\theta)=R_\theta(Y)$ as for
$G$. Since the target $Q^{\rm syn}_\theta$ depends on $\theta$, it cannot be
represented by a frozen teacher: the clean-image teacher $v_X^{\rm tc}$ is
replaced by a second auxiliary field, with the same architecture and training
as $v_X^{\rm aux}$, trained online on detached restorations
$R_\theta(A_\theta X+\varepsilon)$ and treated as fixed during the operator
update.

Each operator update therefore requires a second restoration and a second
auxiliary-field update, which makes $\widetilde G$ more expensive than $G$.
All other settings are those of Appendix~\ref{sec:appendix:exp:common}.

\begin{table}[h]
\caption{\textbf{Unbiased loss $\widetilde G$ versus the default $G$} on the
deblurring tasks, at the last checkpoint. The kernel-error change is relative
to $G$; $\Delta$PSNR is computed before rounding.}
\label{tab:app:two-sample}
\centering
\small
\begin{tabular}{lrrrr}
\toprule
Degradation & $\Delta$ kernel err. & PSNR, $G$ & PSNR, $\widetilde G$ & $\Delta$PSNR \\
\midrule
Motion \texttt{mb}              & $+19.5\%$ & 31.63 & 31.61 & $-0.02$ \\
Motion \texttt{mb2}             & $+14.3\%$ & 34.56 & 34.58 & $+0.02$ \\
Motion \texttt{mbcomplex}       & $+39.2\%$ & 32.18 & 32.25 & $+0.08$ \\
Gaussian $\sigma{=}1$           & $+32.2\%$ & 34.75 & 34.33 & $-0.42$ \\
Gaussian $\sigma{=}3$           & $-35.9\%$ & 29.21 & 29.43 & $+0.23$ \\
Defocus disk                    & $-2.0\%$  & 30.76 & 30.72 & $-0.04$ \\
Anisotropic Gaussian           & $-24.9\%$ & 31.05 & 31.24 & $+0.19$ \\
Per-channel anisotropic         & $+9.9\%$  & 32.26 & 32.17 & $-0.09$ \\
Chromatic aberration            & $-10.7\%$ & 31.77 & 31.75 & $-0.02$ \\
Measured PSF field              & $-5.2\%$  & 32.41 & 32.35 & $-0.06$ \\
\midrule
Mean                            & ---       & 32.06 & 32.04 & $-0.01$ \\
\bottomrule
\end{tabular}
\end{table}

\paragraph{Results.}
We rerun the nine uniform degradations and the measured PSF field with
$\widetilde G$ in place of $G$, as well as blind super-resolution on the same
$100$ DIV2KRK images, and report the last checkpoint.
On average, replacing $G$ by $\widetilde G$ leaves restoration unchanged
($-0.01$~dB over the ten deblurring settings, better on four;
Table~\ref{tab:app:two-sample}), which supports the statement of
Section~\ref{sec:theory} that the bias of $G$ does not degrade performance in
practice. Individual settings move in both directions, most visibly for the
Gaussian-type kernels ($-0.42$~dB for $\sigma=1$, $+0.23$~dB for $\sigma=3$,
$+0.19$~dB for the anisotropic Gaussian). Large changes in kernel error need
not reach the restored image: on the three motion kernels, kernel MAE
increases by $14$ to $39\%$ while PSNR changes by at most $0.08$~dB. On blind
super-resolution, paired over the $100$ images, $\widetilde G$ improves
restoration with raw kernels ($+0.10$~dB, median $+0.02$~dB, $56/100$ images,
$p=0.001$) but not after the post-processing of Section~\ref{sec:exp:sr}
($-0.04$~dB, median $-0.04$~dB, $31/100$ images, $p=0.06$), suggesting that
both correct the same defect of the raw estimates. Since $\widetilde G$ brings
no measurable benefit at a higher cost (roughtly doubling the computational time), we keep $G$ as the default.

\subsection{Runtime Analysis}
\label{sec:appendix:exp:analysis}

\label{sec:appendix:exp:speed}
The reconstruction branch adds one solver call per operator update. On
identical hardware, it increases the time spent in operator updates by
$5.7\times$ to $6.6\times$ (Table~\ref{tab:appendix:speed}). This cost is
incurred only while estimating the operator; once learned, both methods produce
the same type of explicit operator and therefore have identical restoration
cost.

\begin{table}[h]
\caption{\textbf{Cost of the reconstruction branch.} Mean time spent in the
optimization steps per $131K$ training images ($4096$ operator updates at
batch size $32$), excluding data loading, validation and checkpointing, and
discarding the first $131K$ images; all runs on identical GPUs.
}
\label{tab:appendix:speed}
\centering
\small
\begin{tabular}{lrrr}
\toprule
Experiment & Meas.-only (s) & Ours (s) & Overhead \\
\midrule
Uniform, \texttt{mb}           & 212 & 1401 & $6.6\times$ \\
Uniform, Gaussian $\sigma{=}1$ & 235 & 1492 & $6.3\times$ \\
Space-varying PSF              & 309 & 1762 & $5.7\times$ \\
\bottomrule
\end{tabular}
\end{table}

\end{document}